\documentclass[runningheads]{llncs}
\usepackage[T1]{fontenc}
\usepackage{graphicx}
\usepackage{amsmath}
\usepackage{booktabs}
\usepackage{algorithm}
\usepackage[switch]{lineno}
\usepackage[noend]{algpseudocode}
\usepackage{tikz}
\usetikzlibrary{positioning, shapes.geometric, arrows.meta}
\usepackage{newcommand}
\usepackage{multirow}
\usepackage[hidelinks]{hyperref}
\usepackage{Explanation_font}

\usepackage{color}

\begin{document}
\title{Circuit Representations of Random Forests with Applications to XAI}
%
%
\author{Chunxi Ji\orcidID{0000-0002-4475-1987} \and
Adnan Darwiche\orcidID{0000-0003-3976-6735}}
\authorrunning{C. Ji et al.}
%
\institute{University of California, Los Angeles, Los Angeles CA 90095, USA 
\email{jich@cs.ucla.edu, darwiche@cs.ucla.edu}\\}
\maketitle              
\begin{abstract}
We make three contributions in this paper. First, we present an approach for compiling a random forest classifier into a set of circuits, where each circuit directly encodes the instances in some class of the classifier. We show empirically that our proposed approach is significantly more efficient than existing similar approaches. Next, we utilize this approach to further obtain circuits that are tractable for computing the complete and general reasons of a decision, which are instance abstractions that play a fundamental role in computing explanations. Finally, we propose algorithms for computing the robustness of a decision and all shortest ways to flip it. We illustrate the utility of our contributions by using them to enumerate all sufficient reasons, necessary reasons and contrastive explanations of decisions; to compute the robustness of decisions; and to identify all shortest ways to flip the decisions made by random forest classifiers learned from a wide range of datasets.

\keywords{Tractable circuits \and Robustness \and Contrastive explanations}

\end{abstract}

\section{Introduction}
Explaining the decisions made by machine-learning classifiers has been receiving increasing attention in AI. Of particular interest are tree-based classifiers such as random forests~\cite{10.1023/A:1010933404324} and gradient boosted trees~\cite{10.1214/aos/1013203451} as they tend to work well in practice, sometimes outperforming neural networks~\cite{10.5555/3600270.3600307,10.1016/j.inffus.2021.11.011,DBLP:journals/tnn/BorisovLSHPK24}. Popular \textit{model-agnostic} explanations such as LIME~\cite{DBLP:conf/kdd/Ribeiro0G16} and ANCHOR~\cite{DBLP:conf/aaai/Ribeiro0G18} can be used to explain the behavior of tree-based classifiers. However, these explanations are approximations and offer no formal guarantees of their sufficiency and necessity. 

To address this issue, many types of \textit{model-based} explanations which come with formal guarantees have been studied in the AI literature, with two popular ones being the \textit{sufficient reasons}~\cite{DBLP:conf/ecai/DarwicheH20} and the \textit{necessary reasons}~\cite{DBLP:conf/aaai/DarwicheJ22}. A sufficient reason (SR) for a decision was first introduced in~\cite{DBLP:conf/ijcai/ShihCD18} 
under the name \textit{PI-explanation}, later also
called an \textit{abductive explanation}~\cite{DBLP:conf/aaai/IgnatievNM19}. 
An SR  
is a minimal subset of the instance that is guaranteed to trigger the decision, and it answers the question ``why did the instance get assigned to class $c$ by the classifier?'' 
A necessary reason (NR) for a decision is a minimal property
of the instance that flips the decision if violated appropriately (by changing the instance). It answers the question ``what does it take to change the decision?''
and corresponds to a \textit{basic contrastive explanation} as formalized earlier in~\cite{DBLP:conf/aiia/IgnatievNA020}. A \textit{contrastive explanation (CE)}~\cite{lipton_1990,DBLP:journals/ai/Miller19} shows how a decision can be changed to a particular target decision and can be formalized using necessary reasons as we discuss later.

The \textit{complete reason} (CR) for a decision was introduced in~\cite{DBLP:conf/ecai/DarwicheH20} and can be viewed as an instance abstraction that is both sufficient and necessary for the corresponding decision~\cite{DBLP:conf/lics/Darwiche23}. The SRs and NRs for a decision are exactly the prime implicants and implicates of the complete reason~\cite{DBLP:conf/ecai/DarwicheH20,DBLP:conf/aaai/DarwicheJ22}. Hence, we can systematically compute all SRs and NRs for a decision by first obtaining the complete reason and then computing its prime implicants and implicates. The complete reason can be computed by applying an operator introduced in~\cite{DBLP:journals/jair/DarwicheM21} to the instance and a \textit{class formula} which encodes all instances in the same class. All contrastive explanations for a decision and a target class can also be computed as the prime implicates of a special complete reason obtained by merging multiple class formulas~\cite{DBLP:conf/aaai/DarwicheJ22,DBLP:conf/lics/Darwiche23}. When some classifier features are non-binary, improved explanations
can be obtained using the \textit{general reason} which is a better instance abstraction compared to the complete reason in this case~\cite{DBLP:conf/jelia/JiD23}.\footnote{The complete and general reasons coincide when all features are binary~\cite{DBLP:conf/jelia/JiD23}.}

In this paper, we first propose an algorithm that efficiently compiles a random forest classifier into a set of class formulas in the form of circuits. We then provide an algorithm to convert the circuits into tractable decision graphs that allow us to compute the complete and general reasons in time linear in the sizes of these decision graphs. We illustrate the utility of these algorithms by showing how they allow us to efficiently enumerate all SRs and NRs for decisions based on complete reasons. We also extend the approach in~\cite{DBLP:conf/aaai/DarwicheJ22,DBLP:conf/lics/Darwiche23} for computing CEs based on complete reasons to random forest classifiers, as the original approach assumes that every instance belongs to a unique class which does not generally hold for random forests. We further contribute two algorithms based on complete and general reasons, one for computing the robustness of a decision and another for computing all shortest ways to flip a decision. We finally report on the efficiency of our approaches using random forests learned from a wide range of datasets.

In terms of the number of trees that can be handled, 
the current state-of-the-art approaches for explaining random forest classifiers convert such classifiers into symbolic encodings and then query a SAT or MAX-SAT solver~\cite{DBLP:conf/ijcai/Izza021,DBLP:conf/aaai/IzzaIS024,DBLP:conf/ijcai/IzzaIR0S25}.
These approaches (normally geared towards computing one explanation) yield
encodings that contain many auxiliary variables so do not correspond to class formulas and cannot be directly used to compute complete and general reasons. Our approach for compiling random forest classifiers into tractable circuits is more costly, but it yields the complete and general reasons for the decision, which are powerful notions that allow us to efficiently enumerate all SRs, NRs, and CEs, and compute decision robustness as well as all shortest ways to flip the decision --- as illustrated by our empirical evaluation.
The approaches in~\cite{DBLP:journals/sttt/GossenS23,DBLP:conf/icaart/MurtoviSS25} compile random forest classifiers into decision graphs, from which class formulas can be easily extracted (in the form of circuits). But the extracted circuits are not tractable and cannot be used to compute the complete and general reason in linear time like we do. Moreover, our empirical evaluation will show that our compilation algorithm can generate class formulas of random forest classifiers much more efficiently than these approaches.\footnote{The primary purpose of~\cite{DBLP:journals/sttt/GossenS23,DBLP:conf/icaart/MurtoviSS25} is to compile random forest classifiers into more interpretable forms (i.e., decision graphs). But we use these approaches as a baseline for extracting class formulas as we are not aware of other approaches for this task.}

This paper is structured as follows. Section~\ref{sec:Preliminaries} provides technical preliminaries. Section~\ref{sec:sortnet} contains our circuit compilation algorithms. Section~\ref{sec:robustness} includes our algorithms for computing decision robustness and shortest decision flips. Section~\ref{sec:experiments} includes an empirical evaluation and Section~\ref{sec:conclusion} closes with concluding remarks.

\begin{figure}[tb]
        \centering
        \scalebox{0.8}{
        \begin{tikzpicture}[
        roundnode/.style={circle ,draw=black, thick},
        squarednode/.style={rectangle, draw=black, thick},
        ]
        \node[squarednode]     (X)                              {\Large $Y$};
        \node[roundnode]     (c1)       [below=of X, xshift = -0.8cm, yshift = 0.5cm] {\Large $c_1$};
        \node[squarednode]     (Y)       [below=of X, xshift = 0.8cm, yshift = 0.5cm] {\Large $X$};
        \node[squarednode]       (Z1)   [below=of Y, xshift = -0.8cm, yshift = 0.5cm] {\Large $Y$};
        \node[squarednode]       (Z2)   [below=of Y, xshift = 0.8cm, yshift = 0.5cm] {\Large $Z$};
        \node[roundnode]       (c2)   [below=of Z1, xshift = -0.8cm, yshift = 0.5cm] {\Large $c_1$};
        \node[roundnode]       (c3)   [below=of Z1, xshift = 0.8cm, yshift = 0.5cm] {\Large $c_2$};
        \node[roundnode]       (c4)   [below=of Z2, xshift = 0.8cm, yshift = 0.5cm] {\Large $c_3$};
        
        \draw[-latex, thick] (X.240) -- node [anchor = center, xshift = -4mm, yshift = 1mm] {$y_1$} (c1.north);
        \draw[-latex, thick] (X.300) -- node [anchor = center, xshift = 4mm, yshift = 1mm] {$y_{23}$} (Y.north);
        \draw[-latex, thick] (Y.240) -- node [anchor = center, xshift = -4mm, yshift = 1mm] {$x_1$} (Z1.north);
        \draw[-latex, thick] (Y.300) -- node [anchor = center, xshift = 4mm, yshift = 1mm] {$x_{23}$} (Z2.north);
        \draw[-latex, thick] (Z1.240) -- node [anchor = center, xshift = -4mm, yshift = 1mm] {$y_{3}$} (c2.north);
        \draw[-latex, thick] (Z1.300) -- node [anchor = center, xshift = -4mm] {$y_2$} (c3.north);
        \draw[-latex, thick] (Z2.240) -- node [anchor = center, xshift = 4mm] {$z_{12}$} (c3.north);
        \draw[-latex, thick] (Z2.300) -- node [anchor = center, xshift = 4mm, yshift = 1mm] {$z_3$} (c4.north);
        \end{tikzpicture}
        }
        \caption{A decision graph with ternary features ($X$, $Y$, $Z$) and classes ($c_1$, $c_2$, $c_3$). \label{fig:class-formula-dg}}
\end{figure}
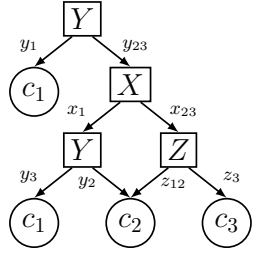

\section{Preliminaries}\label{sec:Preliminaries}
Although a classifier may take different forms, it is essentially a function that maps instances into classes. The class that an instance gets mapped into can be considered as the decision made by the classifier on that instance. The notion of discrete class formulas was proposed in~\cite{DBLP:conf/aaai/DarwicheJ22} to capture the behavior of a classifier, independently of its specific form, 
where each class is represented by a discrete formula that encodes all instances 
in that class.
We will represent a random forest classifier using discrete formulas, which apply
Boolean operators to discrete literals.
This generalizes an earlier proposal in \cite{DBLP:conf/ijcai/ShihCD18} which employed Boolean class formulas and was restricted to binary features and binary classes. We next review the syntax and semantics of discrete formulas.

\subsection*{Discrete Formulas: Syntax and Semantics}
We assume a finite set of variables \(\Sigma\) which represent 
classifier features. Each variable \(X \in \Sigma\) has a finite number of 
\textit{states} \(x_1, \ldots, x_n,\) \(n > 1\). 
For binary and categorical features, their states are provided by the datasets. For continuous features in decision trees/graphs, it is well known that they can be discretized into different intervals and therefore can be viewed as categorical features with a finite number of states; see, e.g.,~\cite{DBLP:conf/jelia/JiD23,DBLP:conf/lics/Darwiche23} for more details.
A \textit{literal} \(l\) for variable \(X\), called \(X\)-literal,
is a set of states such that \(\emptyset \subset l \subset \{x_1, \ldots, x_n\}\). 
We will often denote a literal such as \(\{x_1,x_3,x_4\}\) 
by \(x_{134}\) which reads: the state of variable \(X\) 
is either \(x_1\) or \(x_3\) or \(x_4\). A literal is \textit{simple} iff it contains
a single state. Hence, \(x_3\) is a simple literal but
\(x_{134}\) is not.
Since a simple literal corresponds to a state, 
these two notions are interchangeable. 
A \textit{discrete formula} is either true, false,
literal \(\lit\), negation \(\NOT \alpha\), conjunction
\(\alpha \AND \beta\) or disjunction \(\alpha \OR \beta\) 
where \(\alpha\), \(\beta\) are formulas. 
A term (clause) is a conjunction (disjunction) of literals. 
A negation normal form (\textit{NNF}) is a formula without negations. We say a term/clause/NNF is \textit{simple} iff 
it contains only simple literals.
An NNF is \textit{or-decomposable} if its disjuncts do not share variables. An NNF is \textit{monotone} if for each variable $X$, the $X$-literals are simple and equal. 
A \textit{world} maps each variable in \(\Sigma\) to
one of its states and is typically denoted by \(\w\).
A world \(\w\) is called a \textit{model} of formula \(\alpha\),
written \(\w \models \alpha\), iff \(\alpha\) is satisfied 
by \(\w\) (that is, \(\alpha\) is true at \(\w\)).
Formula \(\alpha\) implies formula \(\beta\), 
written \(\alpha \models \beta\), iff
every model of \(\alpha\) is also a model of \(\beta\). 
A term $\tau_1$ subsumes another term $\tau_2$ iff $\tau_2 \models \tau_1$. A clause $\sigma_1$ subsumes another clause $\sigma_2$ iff $\sigma_1 \models \sigma_2$.
An {\em implicant} of a discrete formula \(\Delta\) is a term \(\delta\) such that \(\delta \models \Delta\).
The implicant is {\em prime} iff no other implicant \(\delta^*\) is such that \(\delta \models \delta^*\).
An {\em implicate} of $\Delta$ is a clause \(\delta\) such that \(\Delta \models \delta\).
The implicate is {\em prime} iff no other implicate \(\delta^*\) is such that \(\delta^* \models \delta\). Finally, $\SetV(\Delta)$ denotes the set of all variables in $\Delta$.

\subsection*{Decision Graphs}

A decision graph generalizes a decision tree by allowing a node to have multiple parents. Each internal node of a decision graph tests a variable (i.e., feature) and each leaf node has a \textit{label;} see Figure~\ref{fig:class-formula-dg}.
A decision graph with labels $c_1, \ldots, c_k$ can be viewed as a classifier, which we can be represented by mutually exclusive class formulas \(C^1, \ldots, C^k\) that encode the instances in classes $c_1, \ldots, c_k.$
An \textit{instance} is a conjunction of literals $x_i,$ one literal for each feature $X.$
An instance \(\instance\) is in class \(c_i\) iff \(\instance \models C^i\).
The class formulas for the classifier in Figure~\ref{fig:class-formula-dg} are 
$C^1 = y_{1}\OR (x_1 \AND y_3)$ (encodes \(12\) instances), 
$C^2 = (x_1 \AND y_2) \OR (x_{23} \AND y_{23} \AND z_{12})$ (encodes \(11\) instances) and 
$C^3 = x_{23} \AND y_{23} \AND z_{3}$ (encodes \(4\) instances). 
Here, instance $\instance = x_2 \AND y_2 \AND z_3$ is in
class \(c_3\) since $\instance \models C^3$.

\subsection*{Random Forests}

A random forest \cite{10.1023/A:1010933404324} is a classifier consisting of a collection of decision trees. A random forest classifier makes a decision on an instance by majority voting. Each decision tree is evaluated at the instance and votes for a class, and the class that receives the most votes is picked. In case of ties, every class that receives the most votes could be picked so there might be instances that belong to multiple classes. For a random forest classifier $R$ with trees $T_1, \ldots, T_n$ and classes $c_1, \ldots, c_k$, we will use $C^i_j$ to denote the class formula of decision tree $T_j$ for class $c_i$. We will use $R^i$ to denote the class formula of the random forest for class $c_i$. 
Suppose we have five trees and three classes ($n=5$ and $k=3$) and let $\instance$ be an instance such that $\instance \models C_1^1 \wedge C_2^1 \wedge C_3^2 \wedge C_4^2 \wedge C_5^3$. That is, when evaluated at $\instance$, $T_1$ and $T_2$ vote for $c_1$, $T_3$ and $T_4$ vote for $c_2$, and $T_5$ votes for $c_3$. Since both $c_1$ and $c_2$ receive most votes, we have $\instance \models R^1$ and $\instance \models R^2$. Hence, it is possible for an instance to satisfy multiple class formulas and therefore belong to multiple classes. 

\subsection*{The Reasons Behind Decisions}

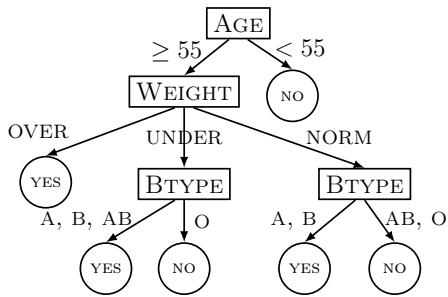
\begin{figure}[tb]
        \centering
        \scalebox{0.8}{
        \begin{tikzpicture}[
        roundnode/.style={text width = 0.55cm, circle ,draw=black, thick, text badly centered},
        squarednode/.style={rectangle, draw=black, thick, text badly centered},
        ]
        \node[squarednode]      (Age)                              {\large \fAge};
        \node[squarednode]        (Weight)       [below=of Age, xshift = -0.9cm, yshift = 0.4cm] {\large \fWeight};
        \node[roundnode]        (No1)       [below=of Age, xshift = 0.9cm, yshift = 0.5cm] {\fno};
        \node[roundnode]        (Yes1)       [below=of Weight, xshift = -2.3cm, yshift = 0.2cm] {\fyes};
        \node[squarednode]        (BloodType1)       [below=of Weight, yshift = 0cm] {\large \fBType};
        \node[squarednode]        (BloodType2)       [below=of Weight, xshift = 3cm, yshift = 0cm] {\large \fBType};
        \node[roundnode] (Yes2) [below=of BloodType1, xshift = -1.3cm, yshift = 0.3cm] {\fyes};
        \node[roundnode] (No2) [below=of BloodType1, xshift = 0cm, yshift = 0.3cm] {\fno};
        \node[roundnode] (Yes3) [below=of BloodType2, xshift = -1cm, yshift = 0.3cm] {\fyes};
        \node[roundnode] (No3) [below=of BloodType2, xshift = 0.5cm, yshift = 0.3cm] {\fno};
        
        \draw[-latex, thick] (Age.240) -- node [anchor = center, xshift = -5mm, yshift = 1mm] {\large $\geq 55$} (Weight.north);
        \draw[-latex, thick] (Age.300) -- node [anchor = center, xshift = 5mm, yshift = 1mm] {\large $< 55$} (No1.north);
        \draw[-latex, thick] (Weight.240) --  node [anchor = center, xshift = -12mm] {\large \fOWeight} (Yes1.north);
        \draw[-latex, thick] (Weight.270) --  node [anchor = center, xshift = -0mm] {\large \fUWeight} (BloodType1.north);
        \draw[-latex, thick] (Weight.300) --  node [anchor = center, xshift = 10mm] {\large \fNom} (BloodType2.north);
        \draw[-latex, thick] (BloodType1.240) --  node [anchor = center, xshift = -9mm] {\large \ftA, \ftB, \ftAB} (Yes2.north);
        \draw[-latex, thick] (BloodType1.270) --  node [anchor = center, xshift = 3mm] {\large \ftO} (No2.north);
        \draw[-latex, thick] (BloodType2.240) --  node [anchor = center, xshift = -6mm] {\large \ftA, \ftB} (Yes3.north);
        \draw[-latex, thick] (BloodType2.270) --  node [anchor = center, xshift = 6mm] {\large \ftAB, \ftO} (No3.north);
        \end{tikzpicture}
        }
        \caption{A classifier of patients susceptible to a certain disease from~\cite{DBLP:conf/jelia/JiD23}. \label{fig:exp-example}}
\end{figure}

We next review the explanations we will cover in this paper by using an example from~\cite{DBLP:conf/jelia/JiD23} relating to a decision-tree classifier with three features ($\Age$, $\BType$, $\Weight$) which is depicted in Figures~\ref{fig:exp-example}. Consider a 
patient Susan with $\Age \GE 55$, $\eql{\BType}{\tA}$ and
$\eql{\Weight}{\OWeight}$, leading to a $\yes$ decision by the classifier.
The sufficient reasons (SRs) for the decision on Susan are
$(\Age \GE 55 \AND \eql{\BType}{\tA})$ and $(\Age \GE 55 \AND \eql{\Weight}{\OWeight})$. SRs correspond to minimal subsets of the instance which are guaranteed to trigger the decision. For example, the SR $(\Age \GE 55 \AND \eql{\BType}{\tA})$ tells us that Susan's $\Age$ and $\BType$ are sufficient to trigger the $\yes$ decision (but neither feature alone can do this). The second SR, $(\Age \GE 55 \AND \eql{\Weight}{\OWeight}),$ tells us that Susan's $\Age$ and $\Weight$ are also sufficient to trigger
the $\yes$ decision.
The general sufficient reasons (GSRs) introduced in~\cite{DBLP:conf/jelia/JiD23} are improved versions of SRs as they contain more information. 
The GSRs for the decision on Susan are
$(\Age\GE 55 \AND \BType \IN \{\tA, \tB\})$ and $(\Age \GE 55 \AND \eql{\Weight}{\OWeight})$. The first GSR tells not only that Susan's $\Age$ and $\BType$ are sufficient to trigger the decision (like the first SR above) but also that the
decision would still be triggered even if Susan's $\BType$ was $\tB.$
GSRs subsume (encode) all SRs but contain more information; see~\cite{DBLP:conf/jelia/JiD23}. 
Syntactically, a GSR corresponds to a conjunction of literals unlike a SR which corresponds to a conjunction of \textit{simple} literals.

The necessary reasons (NRs) for the $\yes$ decision on Susan 
are $(\Age \GE 55)$ and $(\eql{\Weight}{\OWeight} \OR \eql{\BType}{\tA})$. NRs identify minimal subsets of features that will flip the decision if changed appropriately --- but 
they do not tell us how to change these features. As formulated in~\cite{DBLP:conf/aaai/DarwicheJ22}, an NR is 
a minimal property (clause) of the instance which would flip the decision if violated appropriately (by changing the instance). 
We can flip the decision on Susan by violating the NR $(\eql{\Weight}{\OWeight} \OR \eql{\BType}{\tA})$ which can be done by changing
the values of $\Weight$ and $\BType$ in the instance. There are six possible changes to $\Weight$ and $\BType$ that will violate 
this NR, some will flip the decision but others will not. For example, the changes  
$\eql{\Weight}{\Nom}, \eql{\BType}{\tO}$ and $\eql{\Weight}{\UWeight}, \eql{\BType}{\tAB}$
will both violate the NR but only the first one will flip the decision. This issue was addressed in~\cite{DBLP:conf/jelia/JiD23} by introducing general necessary reasons (GNRs), which subsume NRs and come with stronger guarantees.
The GNRs for the decision on Susan are 
$(\Age\GE 55)$, $(\BType \IN \{\tA, \tB, \tAB\} \OR$  $\eql{\Weight}{\OWeight}\})$ and $(\BType \IN \{\tA, \tB\} \OR \Weight\IN\{\UWeight, \OWeight\})$.
If the instance is changed to violate any GNR, the decision will change regardless of how the GSR was violated. 
For example, if we set $\BType$ to $\tAB$ and $\Weight$ to $\Nom$, the third GNR will be violated and the decision on Susan becomes $\no$. 
GNRs can contain arbitrary literals, unlike NRs which contain only simple literals.

There are different methods for computing SRs, NRs and their generalizations (GSRs, GNRs). One method, which we adopt in this paper, is based on the notion of an \textit{instance abstraction.}
Intuitively, an instance abstraction is a logical condition on the instance that satisfies some syntactic restriction yet is both sufficient and necessary to trigger the decision on that instance; see~\cite{DBLP:conf/lics/Darwiche23}.
The \textit{complete reason}~\cite{DBLP:conf/ecai/DarwicheH20} is one such abstraction with the syntactic restriction that it is an NNF whose literals are \textit{contained} in the instance (hence, must be simple literals). 
The \textit{general reason}~\cite{DBLP:conf/jelia/JiD23}  is another abstraction with the syntactic restriction that it is an NNF whose literals are \textit{implied} by the instance (hence, can be arbitrary literals).\footnote{The general reason is weaker than the complete reason, and the two notions coincide with one another when all features are binary~\cite{DBLP:conf/jelia/JiD23}.}
SRs are the prime implicants of the complete reason~\cite{DBLP:conf/ecai/DarwicheH20}, and NRs are its prime implicates~\cite{DBLP:conf/aaai/DarwicheJ22}. Moreover, GSRs and GNRs are respectively the variable-minimal prime implicants and prime implicates of the general reason; see~\cite{DBLP:conf/jelia/JiD23} for the definition of variable-minimality and algorithms that computes GSRs and GNRs from the general reason.
The complete reason for the above decision on Susan is (equivalent to) $(\Age \GE 55 \AND (\eql{\BType}{\tA} \OR \eql{\Weight}{\OWeight}))$,
while the general reason is (equivalent to) $(\Age \GE 55 \AND (\BType \in \{\tA, \tB, \tAB\} \OR \Weight \in \{\OWeight, \Nom\}) \AND (\BType \in \{\tA, \tB\} \OR \Weight \in \{\OWeight, \UWeight\}))$.
The complete and general reasons can both be computed efficiently (in linear time) if class formulas have specific properties (e.g., or-decomposable); see~\cite{DBLP:conf/lics/Darwiche23} for a review of corresponding results. This is why our approach rests on obtaining class formulas in specific forms, to facilitate the computation of complete and general reasons, which will further facilitate the computation of SRs, NRs and their generalizations. 

\section{Compiling Random Forests into Circuits}\label{sec:sortnet}

\begin{figure}[tb]
\centering
\begin{tikzpicture}[xscale=0.7, yscale=0.7, thick]
  \def\n{8}
  \def\xstep{1.0}
  \def\ystep{0.8}
  \def\dx{0.25} 

  \foreach \i in {1,...,\n} {
    \draw (0, -\i*\ystep) -- (7*\xstep, -\i*\ystep);
    \node[left] at (0, -\i*\ystep) {$x_{\i}$};
    \node[right] at (7*\xstep, -\i*\ystep) {$y_{\i}$};
  }

  \tikzset{dot/.style={circle,fill,inner sep=1.2pt}}

  \newcommand{\comp}[3]{
    \draw (#3, -#1*\ystep) -- (#3, -#2*\ystep);
    \node[dot] at (#3, -#1*\ystep) {};
    \node[dot] at (#3, -#2*\ystep) {};
  }

  \comp{1}{2}{1.0*\xstep}
  \comp{3}{4}{1.0*\xstep}
  \comp{5}{6}{1.0*\xstep}
  \comp{7}{8}{1.0*\xstep}

  \comp{1}{3}{2.0*\xstep}
  \comp{2}{4}{2.0*\xstep+\dx}
  \comp{5}{7}{2.0*\xstep}
  \comp{6}{8}{2.0*\xstep+\dx}

  \comp{2}{3}{3.0*\xstep}
  \comp{6}{7}{3.0*\xstep}

  \comp{1}{5}{4.0*\xstep-\dx}
  \comp{2}{6}{4.0*\xstep}
  \comp{3}{7}{4.0*\xstep+\dx}
  \comp{4}{8}{4.0*\xstep+2*\dx}

  \comp{3}{5}{5.0*\xstep}
  \comp{4}{6}{5.0*\xstep+\dx}

  \comp{2}{3}{6.0*\xstep}
  \comp{4}{5}{6.0*\xstep}
  \comp{6}{7}{6.0*\xstep}

\end{tikzpicture}
\caption{An odd-even network with eight inputs/outputs. \label{fig:odd-even-sortnet}}
\end{figure}

\begin{figure}[tb]
    \centering
    \includegraphics[width=0.6\linewidth]{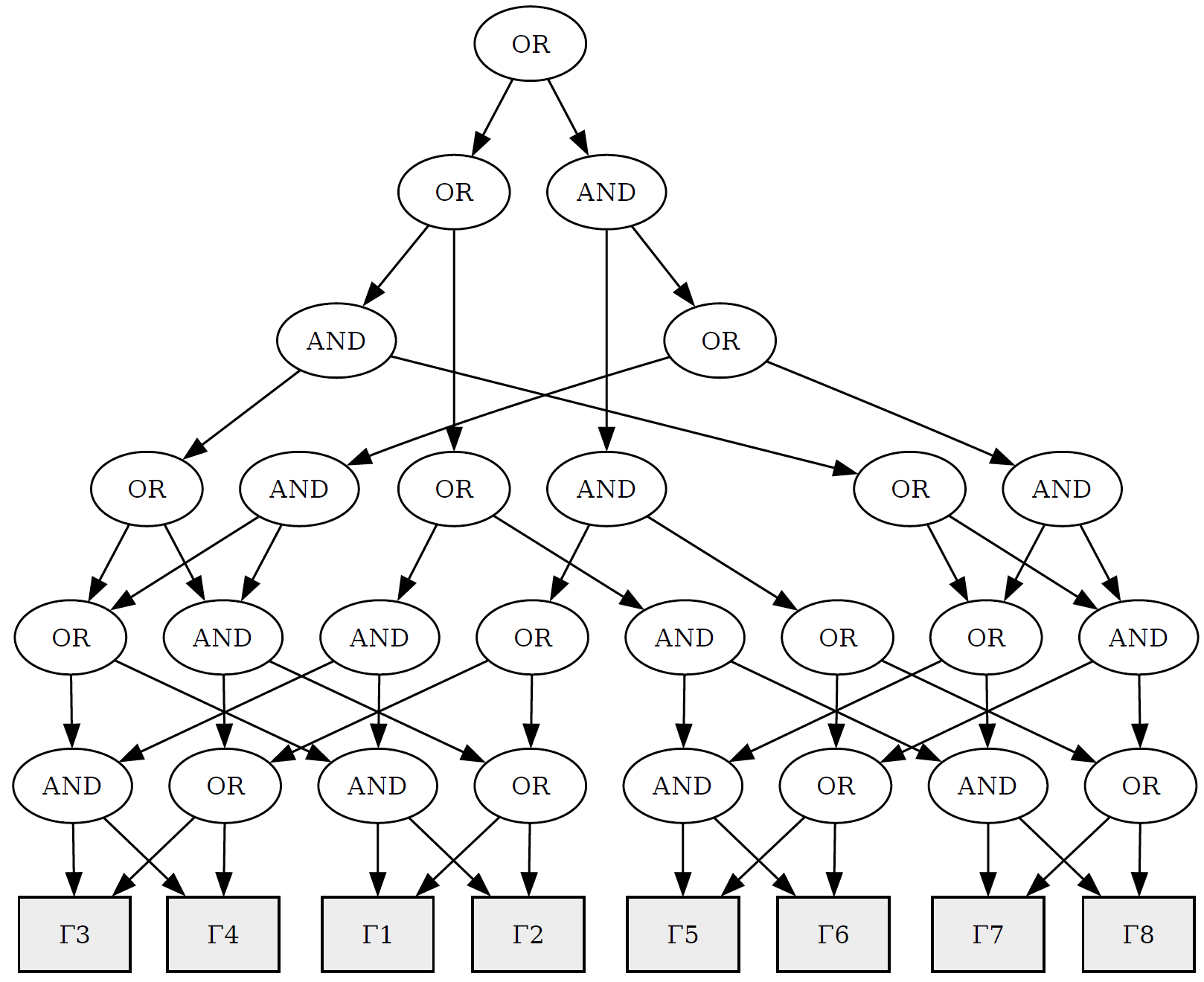}
    \caption{The NNF circuit corresponding to output $\Delta_4$ of the sorting network in Figure~\ref{fig:odd-even-sortnet}.}
    \label{fig:y4}
\end{figure}

A common approach to symbolically encode a random forest classifier is to use auxiliary Boolean variables when representing majority constraints; see, e.g.,~\cite{DBLP:conf/ijcai/Izza021,DBLP:conf/ijcai/IzzaIR0S25}. Consider a random forest with decision trees $T_1, T_2, T_3$ and classes $c_1, c_2$. 
The following encoding with auxiliary variables $a_1, a_2, a_3$ captures the instances in class $c_1$:
\(
a_1 \Leftrightarrow C_1^1, a_2 \Leftrightarrow C_2^1, a_3 \Leftrightarrow C_3^1, a_1 + a_2 + a_3 \ge 2.\)
These encodings are usually converted into CNFs which are used to compute explanations by applying carefully crafted queries using SAT, SMT, or MAX-SAT solvers.\footnote{Encoding cardinality constraints, like $a_1 + a_2 + a_3 \ge 2,$ as CNFs also requires additional auxiliary variables; see, e.g. \cite{DBLP:conf/cp/BailleuxB03,10.1007/11564751_73,DBLP:conf/sat/AsinNOR09}.} Auxiliary variables are necessary for these encodings as it is well known that CNFs with no auxiliary variables require exponential size to encode majority functions; see, e.g.,~\cite{10.5555/2190632,DBLP:journals/mst/EmdinKMS24}. Due to the presence of auxiliary variables, these encodings do not directly correspond to class formulas; for example, they can evaluate to false under an instance that satisfies the class formula if auxiliary variables are set carefully. Therefore, we next 
present a method for efficiently compiling a random forest classifier into class formulas in the form of NNF circuits (i.e., circuits with only literals, conjunctions and disjunctions).

The method is based on sorting networks, a tool from theoretical computer science which can be used to efficiently generate majority NNF circuits~\cite{10.1145/1468075.1468121,DBLP:journals/ppl/Parberry92}. We next review these networks, particularly the ones known as odd-even sorting networks~\cite{10.1145/1468075.1468121} as we found them favorable to pairwise sorting networks~\cite{DBLP:journals/ppl/Parberry92} in terms of compilation time.
There are other methods for generating majority circuits, beyond sorting networks, but they are not as practical even though some achieve better theoretical results in terms of circuit depths; see~\cite{DBLP:conf/approx/Dobrokhotova-Maikova24}.

A sorting network receives $n$ inputs $x_1, \ldots, x_n$ and permutes them to generate the sorted outputs $y_1, \ldots, y_n$.\footnote{As is customary, we will assume that $n$ is a power of $2$ without loss of generality (as we can always pad inputs).} The sorting network consists of comparators, where each comparator takes two inputs $a_1, a_2$ and generates the sorted output $b_1, b_2.$ In particular, $b_1$ is the maximal value of the inputs, $\max(a_1,a_2)$, and $b_2$ is the minimal value, $\min(a_1,a_2)$. Figure~\ref{fig:odd-even-sortnet} shows an example of an odd-even sorting network introduced in~\cite{10.1145/1468075.1468121} for eight inputs.

\begin{algorithm}[tb]
\begin{footnotesize}
\caption{Construct sorting network with $n$ inputs
\label{alg:Sortnet}}
\begin{algorithmic}[1]
\Require circuits $x[1, \ldots, n]$, $n$ is power of $2$
\Ensure circuits $y[1, \ldots, n]$
\Function{Sortnet}{$x, n$}
\If{$n == 1$}
\Return $x$
\EndIf
\State $m \gets n/2$
\State $A \gets \Call{Sortnet}{x[1, \ldots, m], m}$
\State $B \gets \Call{Sortnet}{x[m + 1, \ldots, n], m}$
\State \Return $\Call{Merge}{A, B, m}$
\EndFunction
\end{algorithmic}
\end{footnotesize}
\end{algorithm}

\begin{algorithm}[tb]
\begin{footnotesize}
\caption{Merge two sorted lists
\label{alg:Merge}}
\begin{algorithmic}[1]
\Require sorted lists $A[1, \ldots, m]$ and $B[1, \ldots, m]$, $m$ is power of $2$
\Ensure sorted list $y[1, \ldots, 2m]$
\Function{Merge}{$A, B, m$}
\If{$m == 1$}
\State \Return $[A[1] \vee B[1], A[1] \wedge B[1]]$
\EndIf
\State $A_1 \gets A[1,3,5,\ldots,m-1]$, $A_2 \gets A[2,4,6,\ldots,m]$
\State $B_1 \gets B[1,3,5,\ldots,m-1]$, $B_2 \gets B[2,4,6,\ldots,m]$
\State $C \gets \Call{Merge}{A_1, B_1, m/2}$
\State $D \gets \Call{Merge}{A_2, B_2, m/2}$
\State $y \gets [C[1]]$
\For{$i = 2 \text{ to } m$}
\State Append $C[i] \vee D[i - 1]$ to $y$
\State Append $C[i] \wedge D[i - 1]$ to $y$
\EndFor
\State Append $D[m]$ to $y$
\State \Return $y$
\EndFunction
\end{algorithmic}
\end{footnotesize}
\end{algorithm}
\begin{algorithm}[t]
\begin{footnotesize}
\caption{Construct class formula
\label{alg:RF-classformula}}
\begin{algorithmic}[1]
\Require decision trees $T[1, \ldots, n]$, classes $c[1, \ldots, k]$
\Ensure class formula of class $c[i]$ of the random forest
\Function{RFClassFormula}{$T, n, c, k, i$}
\State $m \gets$ the smallest power of $2$ no less than $n$
\If{$k == 2$}
\For{$l = 1 \text{ to } n$}
\State $x[l] \gets$ class formula for $c[i]$ of $T[l] $
\EndFor
\For{$l = n + 1 \text{ to } m$}
\State $x[l] \gets$ false
\EndFor
\State \Return $\Call{Sortnet}{x, m}[\lceil n/2 \rceil]$
\label{ln:sortnet-out1}
\EndIf
\State $R^i \gets$ true
\For{$j = 1 \text{ to } k$, $j \not = i$}
\For{$l = 1 \text{ to } n$}
\State $x[2l] \gets$ neg. of class formula for $c[j]$ of $T[l]$
\State $x[2l + 1] \gets$ class formula for $c[i]$ of $T[l]$
\EndFor
\For{$l = n + 1 \text{ to } m$}
\State $x[2l] \gets$ false;\ \ \ $x[2l + 1] \gets$ false \label{line:conjoin-dds}
\EndFor
\State $R^i \gets R^i \wedge \Call{Sortnet}{x, 2m}[n]$
\label{ln:sortnet-out2}
\label{ln:conjoin-F}
\EndFor
\State \Return $R^i$
\EndFunction
\end{algorithmic}
\end{footnotesize}
\end{algorithm}

Suppose now that $\Gamma_1,\ldots,\Gamma_n$ are NNF circuits, each over features $F_1,\ldots,F_m$. We will pass these NNF circuits as inputs to the sorting network and interpret a comparator as disjoining/conjoining its inputs to compute the max/min values of these inputs. That is, when a comparator is passed NNF circuits $\alpha,\beta$ as inputs, it will output the NNF circuits $\alpha\vee\beta, \alpha\wedge\beta.$ Hence, the final outputs of the sorting network will be NNF circuits $\Delta_1,\ldots,\Delta_n$ over features $F_1,\ldots,F_m$.
Moreover, the NNF circuit $\Delta_{n/2}$ will evaluate to true iff at least $n/2$ of the input circuits $\Gamma_1,\ldots,\Gamma_n$ will evaluate to true (at some setting of the features $F_1,\ldots,F_m$). Figure~\ref{fig:y4} depicts the NNF circuit $\Delta_4$ (fourth output) of the sorting network in Figure~\ref{fig:odd-even-sortnet}.
Algorithm~\ref{alg:Sortnet} provides the pseudocode for constructing an odd-even sorting network~\cite{10.1145/1468075.1468121} and uses 
Algorithm~\ref{alg:Merge} as a helper. This pseudocode is based on the analysis of odd-even sorting networks in~\cite{DBLP:conf/lpar/CodishZ10}.
Algorithm~\ref{alg:Sortnet} requires the number of input circuits to be a power of $2$, which can be ensured by padding the input with trivial circuits corresponding to false. 
The algorithm returns an ordered (sorted) list of NNF circuits.\footnote{One can also use sorting networks to efficiently encode majority functions or cardinality constraints as CNFs but with auxiliary variables; see, e.g.,~\cite{10.5555/646542.696213,DBLP:journals/jsat/EenS06}.}

We next show how to use the construction embodied in Algorithm~\ref{alg:Sortnet}
to obtain
class formulas for random forests. We treat binary classes first and then generalize to multi classes.

\subsection*{Class Formulas for Binary Classes}

Consider a random forest classifier $R$ with decision trees $T_1, \ldots, T_8$ and binary classes $c_1,c_2$. 
To obtain a class formula for $c_1,$ we first construct class formulas $C_1^1, \ldots, C_8^1$ for decision trees 
$T_1, \ldots, T_8$ in the form of NNF circuits, which can be done efficiently as shown in~\cite{DBLP:conf/aaai/DarwicheJ22}. We then pass the circuits 
$C_1^1, \ldots, C_8^1$ to a sorting network with eight inputs, as discussed earlier, and collect the fourth circuit output which will be equivalent to class formula $R^1.$ That is, $R^1$ will evaluate to true at an instance when at least four of the decision trees will assign class $c_1$ to that instance. Class formula $R^2$ can be obtained similarly.
More generally, consider a random forest classifier $R$ with decision trees $T_1, \ldots, T_n$ and binary classes $c_1, c_2$ where $n$ is a power of $2.$ 
Using Algorithm~\ref{alg:Sortnet}, we obtain the class formula $R^i$ by calling $\text{Sortnet}([C^i_{1},\ldots,C^i_{n}], n)[n/2]$ for $i = 1,2$
where $C^i_l$ is the class formula for decision tree $T_l$ and class $c_i.$ If $n$ is not a power of $2$, we pad the inputs by adding false circuits. Let $m$ be the smallest power of $2$ that is larger than $n$ and $x_1, \ldots, x_m$ be a list where $x_l = C^i_l$ for $l \le n$ and $x_{l} = \text{false}$ for $l > n$. Then the class formula $R^i = \text{ Sortnet}([x_{1},\ldots,x_{m}], m)[\lceil n/2 \rceil]$ for $i = 1,2$.

\subsection*{Class Formulas for Multi Classes}

The technique we just discussed does not work for more than two classes as we show next. Consider a random forest with $10$ decision trees $T_1, \ldots, T_{10}$ and three classes $c_1, c_2, c_3$. Suppose that for instance $I_1$, $4$ trees vote for class $c_1$, $3$ trees vote for class $c_2$, and $3$ trees vote for class $c_3$. Suppose further that for instance $I_2$,  $4$ trees vote for class $c_1$, one tree votes for class $c_2$, and $5$ trees vote for class $c_3$. For both instances, the number of class formulas $C^1_l$ that evaluate to true is the same. However, the random forest classifies $I_1$ into class $c_1$ but classifies $I_2$ into class $c_3$. Therefore, to construct class formula $R^1$ of the random forest, the inputs to the sorting network cannot simply be the class formulas $C^1_{1},\ldots,C^1_{10}$, as they do not contain enough information. 

We next propose a more refined technique that can handle multiple classes. 
Let $T_1, \ldots, T_n$ be the decision trees in a random forest classifier $R$ and $c_1, \ldots, c_k$ be the classes. We will construct the class formula $R^i$ in two steps. First, for each $j \not = i,$ 
we construct an NNF circuit $F^{ij}(T_1, \ldots, T_n)$ that encodes instances for which ``class $c_i$ has no fewer votes than class $c_j.$'' 
Next, we conjoin $F^{ij}(T_1, \ldots, T_n)$ for all $j \not = i$ to obtain an NNF circuit representing class formula $R^i.$

Let $\COUNT_\instance(\alpha_1,\ldots,\alpha_n)$ denote the number of formulas among $\alpha_1, \ldots, \alpha_n$ that are satisfied by instance $\instance$, i.e., the cardinality of the set $\{i \mid \instance \models \alpha_i\}$.
We then have
$\COUNT_\instance(C^i_{1},\ldots, C^i_{n}) \geq \COUNT_\instance(C^j_{1},\ldots,C^j_{n})$
iff 
$\COUNT_\instance(C^i_{1},\ldots,C^i_{n}) + 
\COUNT_\instance(\neg C^j_{1},\ldots,\neg C^j_{n}) \geq n.$
Therefore, for a sorting network with inputs $C^i_{1},\ldots,C^i_n, \neg C_{1}^j, \ldots, \neg C_{n}^j$ and outputs $y_{1},\ldots,y_{2n}$, the output $y_n$ is satisfied by instance $\instance$ iff class $c_i$ receives no fewer votes than class $c_j$ for that instance. Hence, output $y_n$ is exactly the NNF circuit $F^{ij}(T_1, \ldots, T_n)$, which can be generated using Algorithm~\ref{alg:Sortnet}.\footnote{Formulas $C^i_{l}$ and $\neg C_{l}^j$ can be obtained as NNF circuits from tree $T_l$ in linear time using a closed-form in~\cite{DBLP:conf/aaai/DarwicheJ22}.
It is also worth mentioning that we can omit the first layer of $F^{ij}(T_1, \ldots, T_n)$ if we order the sorting network inputs as follows: $\neg C_{1}^j, C^i_{1},\neg C_{2}^j,C^i_{2}, \ldots,\neg C_{n}^j, C^i_n$. This is because if a tree votes for class $c_i$, it must not vote for class $c_j$. Thus, we can remove the comparators for $\neg C_{l}^j$ and $C_{l}^i$ for all $l$ from the sorting network as $C_l^i \models \neg C_{l}^j$: the inputs are in the form of sorted pairs. }
We can finally construct the formula 
$R^i$ for class $c_i$ by conjoining the circuits $F^{ij}(T_1, \ldots, T_n)$ for all $j \not = i$:
$R^i = \bigwedge_{j \not = i}F^{ij}(T_1, \ldots, T_n)$.

Algorithm~\ref{alg:RF-classformula} puts everything we discussed together for computing the class formulas of a random forest. It works for any number of classes and any number of decision trees, and will return class formulas as NNF circuits if the passed class formulas of decision trees are NNF circuits.
Since odd-even sorting networks have a depth of $O(\log^2(n))$ for $n$ inputs~\cite{10.1145/1468075.1468121}, Algorithm~\ref{alg:RF-classformula} generates class formulas of size $O(S + nk\log^2(n))$, where $n$ is the number of decision trees, $k$ is the number of classes, and $S$ is the total size of class formulas for the decision trees.

\subsection*{Tractable Class Formulas}

\begin{figure}[tb]
        \centering
        \scalebox{0.75}{
        \begin{tikzpicture}[
        roundnode/.style={circle ,draw=black, thick},
        squarednode/.style={rectangle, draw=black, thick},
        ]
        \node[squarednode]     (X)                              {\Large $X$};
        \node[squarednode]     (Y1)       [below=of X, xshift = -0.8cm, yshift = 0.25cm] {\Large $Y$};
        \node[squarednode]     (Y2)       [below=of X, xshift = 0.8cm, yshift = 0.25cm] {\Large $Y$};
        \node[squarednode]     (Y3)   [below=of Y2, xshift = -0.8cm, yshift = 0.25cm] {\Large $Y$};
        \node[roundnode]       (c1)   [below=of Y3, xshift = -1.2cm, yshift = 0.25cm] {\Large $c_1$};
        \node[roundnode]       (c2)   [below=of Y3, yshift = 0.25cm] {\Large $c_2$};
        \node[roundnode]       (c3)   [below=of Y3, xshift = 1.2cm, yshift = 0.25cm] {\Large $c_3$};
        
        \draw[-latex, thick] (X.240) -- node [anchor = center, xshift = -4mm, yshift = 1mm] {$x_{12}$} (Y1.north);
        \draw[-latex, thick] (X.300) -- node [anchor = center, xshift = 3mm, yshift = 1mm] {$x_{3}$} (Y2.north);
        \draw[-latex, thick] (Y1.270) -- node [anchor = center, xshift = -3mm, yshift = 1mm] {$y_{34}$} (c1.north);
        \draw[-latex, thick] (Y1.300) -- node [anchor = center, xshift = -2mm] {$y_{12}$} (Y3.north);
        \draw[-latex, thick] (Y2.240) -- node [anchor = center, xshift = 2.5mm] {$y_{34}$} (Y3.north);
        \draw[-latex, thick] (Y2.270) -- node [anchor = center, xshift = 3mm, yshift = 1mm] {$y_{12}$} (c3.north);
        \draw[-latex, thick] (Y3.240) -- node [anchor = center, xshift = -1mm, yshift = 2mm] {$y_1$} (c1.north);
        \draw[-latex, thick] (Y3.270) -- node [anchor = center, xshift = 1mm] {$y_{23}$} (c2.north);
        \draw[-latex, thick] (Y3.300) -- node [anchor = center, xshift = 1mm, yshift = 2mm] {$y_4$} (c3.north);
        \end{tikzpicture}
        }
        \scalebox{0.75}{
        \begin{tikzpicture}[
        roundnode/.style={circle ,draw=black, thick},
        squarednode/.style={rectangle, draw=black, thick},
        ]
        \node[squarednode]     (X)                              {\Large $X$};
        \node[squarednode]     (Y1)       [below=of X, xshift = -0.8cm, yshift = 0.25cm] {\Large $Y$};
        \node[squarednode]     (Y2)       [below=of X, xshift = 0.8cm, yshift = 0.25cm] {\Large $Y$};
        \node[squarednode]     (Y3)   [below=of Y1, xshift = 0.4cm, yshift = 0.25cm] {\Large $Y$};
        \node[squarednode]     (Y4)   [below=of Y2, xshift = -0.4cm, yshift = 0.25cm] {\Large $Y$};
        \node[roundnode]       (c1)   [below=of Y3, xshift = -0.8cm, yshift = 0.25cm] {\Large $c_1$};
        \node[roundnode]       (c2)   [below=of Y3, xshift = 0.4cm, yshift = 0.25cm] {\Large $c_2$};
        \node[roundnode]       (c3)   [below=of Y4, xshift = 0.8cm, yshift = 0.25cm] {\Large $c_3$};
        
        \draw[-latex, thick] (X.240) -- node [anchor = center, xshift = -4mm, yshift = 1mm] {$x_{12}$} (Y1.north);
        \draw[-latex, thick] (X.300) -- node [anchor = center, xshift = 3mm, yshift = 1mm] {$x_{3}$} (Y2.north);
        \draw[-latex, thick] (Y1.270) -- node [anchor = center, xshift = -3mm, yshift = 1mm] {$y_{34}$} (c1.north);
        \draw[-latex, thick] (Y1.300) -- node [anchor = center, xshift = 2.5mm] {$y_{12}$} (Y3.north);
        \draw[-latex, thick] (Y2.240) -- node [anchor = center, xshift = -2mm] {$y_{34}$} (Y4.north);
        \draw[-latex, thick] (Y2.270) -- node [anchor = center, xshift = 3mm, yshift = 1mm] {$y_{12}$} (c3.north);
        \draw[-latex, thick] (Y3.240) -- node [anchor = center, xshift = 1.5mm] {$y_1$} (c1.north);
        \draw[-latex, thick] (Y3.270) -- node [anchor = center, xshift = -1mm] {$y_{2}$} (c2.north);
        \draw[-latex, thick] (Y4.270) -- node [anchor = center, xshift = 1.5mm] {$y_{3}$} (c2.north);
        \draw[-latex, thick] (Y4.300) -- node [anchor = center, xshift = -1mm] {$y_4$} (c3.north);
        \end{tikzpicture}
        }
        \scalebox{0.75}{
        \begin{tikzpicture}[
        roundnode/.style={circle ,draw=black, thick},
        squarednode/.style={rectangle, draw=black, thick},
        ]
        \node[squarednode]     (X)                              {\Large $X$};
        \node[squarednode]     (Y1)       [below=of X, xshift = -0.8cm, yshift = 0.25cm] {\Large $Y$};
        \node[squarednode]     (Y2)       [below=of X, xshift = 0.8cm, yshift = 0.25cm] {\Large $Y$};
        \node[]     (Y3)   [below=of Y1, xshift = 0.4cm, yshift = 0.25cm] {};
        \node[]     (Y4)   [below=of Y2, xshift = -0.4cm, yshift = 0.25cm] {};
        \node[roundnode]       (c1)   [below=of Y3, xshift = -0.8cm, yshift = 0.25cm] {\Large $c_1$};
        \node[roundnode]       (c2)   [below=of Y3, xshift = 0.4cm, yshift = 0.25cm] {\Large $c_2$};
        \node[roundnode]       (c3)   [below=of Y4, xshift = 0.8cm, yshift = 0.25cm] {\Large $c_3$};
        
        \draw[-latex, thick] (X.240) -- node [anchor = center, xshift = -4mm, yshift = 1mm] {$x_{12}$} (Y1.north);
        \draw[-latex, thick] (X.300) -- node [anchor = center, xshift = 3mm, yshift = 1mm] {$x_{3}$} (Y2.north);
        \draw[-latex, thick] (Y1.270) -- node [anchor = center, xshift = -3mm] {$y_{134}$} (c1.north);
        \draw[-latex, thick] (Y1.300) -- node [anchor = center, xshift = -2mm] {$y_{2}$} (c2.north);
        \draw[-latex, thick] (Y2.240) -- node [anchor = center, xshift = 2mm] {$y_{3}$} (c2.north);
        \draw[-latex, thick] (Y2.270) -- node [anchor = center, xshift = 3mm] {$y_{124}$} (c3.north);
        \end{tikzpicture}
        }
        \caption{A decision graph (left), an equivalent weak test-once decision graph (middle), and an equivalent test-once decision graph (right). \label{fig:test-once}}
\end{figure}
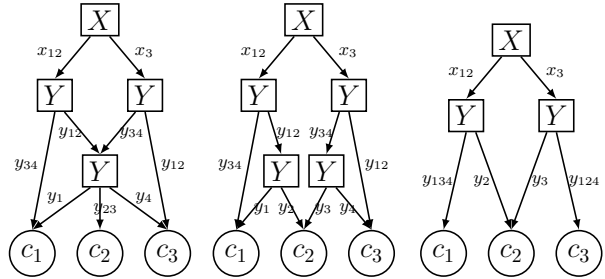

\begin{algorithm}[tb]
\begin{footnotesize}
\caption{Conjoin or disjoin two decision graphs
\label{alg:Apply}}
\begin{algorithmic}[1]
\Require Weak test-once decision graphs $D_1$ and $D_2$ whose leaves are true/false, path $p$, operation $op$
\Ensure A weak test-once decision graph $D_3 = D_1\ op\ D_2$ 
\Function{Apply}{$D_1$, $D_2$, $p$, $op$}
\If{$D_1$ or $D_2$ is True or False}
\State $D_3 \gets D_1\; op \; D_2$ \label{ln:Apply-proof}
\State Remove states inconsistent with path $p$ from $D_3$
\State \Return $D_3$
\EndIf
\State $p^\star \gets$ portion of path $p$ for variables in $D_1,D_2$
\If{$cache[D_1,D_2,p^\star,op] \neq nil$} 
\State \Return $cache[D_1,D_2,p^\star,op]$
\EndIf
\State $V_1,V_2 \gets$ variables tested at the roots of $D_1,D_2$ \label{line:branch}
\State $E \gets \{\}$ (empty set of edges)
\If{$V_1==V_2$}\label{line:test-same-variable}
\For{edge $({s_1},c_1)$ of the root of $D_1$ and edge $({s_2},c_2)$ of the root of $D_2$}
\State $s \gets s_1 \cap s_2 \cap p[V_1]$
\If{$s$ is not empty}
\State $p' \gets$ a copy of $p$ with $p'[V_1]=s$
\State $c \gets \Call{Apply}{c_1, c_2, p', op}$ \label{ln:Apply-proof-2}
\State Add edge $(s,c)$ to $E$
\EndIf
\EndFor
\Else
\For{edge $(s_1,c_1)$ of the root of $D_1$}
\State $s \gets s_1 \cap p[V_1]$
\If{$s$ is not empty}
\State $p' \gets$ a copy of $p$ with $p'[V_1] = s$
\State $c \gets \Call{Apply}{c_1, D_2, p', op}$ \label{ln:Apply-proof-3}
\State Add edge $(s,c)$ to $E$
\EndIf
\EndFor
\EndIf
\State $D_3 \gets$ decision node for variable $V_1$ with edges $E$
\State $cache[D_1,D_2,p^\star,op] \gets D_3$
\State \Return $D_3$ \label{line:return}
\EndFunction
\end{algorithmic}
\end{footnotesize}
\end{algorithm}

The method we just discussed allows us to construct class formulas as NNF circuits,
and quite efficiently as we show in Section~\ref{sec:experiments}. Our ultimate goal, however, is to use
these class formulas to compute complete and general reasons for decisions, which form a basis for
computing popular explanations like (general) sufficient/necessary reasons and contrastive explanations as discussed in~\cite{DBLP:conf/aaai/DarwicheJ22,DBLP:conf/jelia/JiD23}. If class formulas satisfy certain 
properties, then complete and general reasons can be computed in
time linear in the formula's size for any decision. One way to attain such properties 
is to represent the class formula as a decision graph that satisfies the weak-test once property, which we illustrate by an example next leaving the formal
definition to~\cite{DBLP:conf/aaai/DarwicheJ22}.
Consider the three decision graphs in Figure~\ref{fig:test-once}.
The test-once property says that a variable can be tested (i.e., appears) at most once on any path in the decision graph. The decision graph on the right satisfies the test-once property, and the two other decision graphs do not since the variable $Y$ appears twice on paths from the root to class $c_2$. The weak test-once property is less restrictive: it allows a variable to be tested multiple times on a path, but only if the test involves states of the variable that are feasible at the current testing node. Therefore, every test-once decision graph is also weak test-once. The decision graph on the left does not satisfy the weak test-once property: the state $y_3$ is not feasible for the second $Y$ variable on the path $X \xrightarrow{x_{12}} Y \xrightarrow{y_{12}} Y \xrightarrow{y_{23}} c_2$. We can make this decision graph weak test-once by creating an additional node for variable $Y$ and updating the states they test; see the middle decision graph of Figure~\ref{fig:test-once}.

We next present Algorithm~\ref{alg:Apply}, which can conjoin or disjoin two weak test-once decision graphs whose leaves are either true or false while preserving the weak test-once property.
The algorithm is based on a similar one for ordered binary decision diagrams (OBDDs)~\cite{10.1145/136035.136043} 
with a major exception: it takes an additional argument, path $p,$ which maps every variable $V$ to a set of its states, $p[V]$ (initially, $p[V]$ contains all states of variable $V$). The algorithm
assumes that a decision node for variable $V$ is represented as a set of edges $(s,c)$ where $s$ is a set of states for $V$ and $c$ is the decision node pointed to by this edge. The key insight here is that the path $p$ keeps track of feasible states for variables which are used to enforce the weak test-once property. 
We currently do not have a useful bound on the complexity of Algorithm~\ref{alg:Apply} but will evaluate it empirically in Section~\ref{sec:experiments}.\footnote{Our implementation of Algorithm~\ref{alg:Apply} includes optimizations and heuristics that are left out due to space limitations.} The proof of the correctness of Algorithm~\ref{alg:Apply} can be found in the Appendix.

We used Algorithm~\ref{alg:Apply} as follows. First, we construct each formula $C_l^i$ ($\neg C_l^i$) for decision tree $T_l$ and class $c_i$ in the form of a weak test-once decision graph by replacing the label $c_i$ in the tree by true (false) and all other labels by false (true). We then pass these as inputs to Algorithm~\ref{alg:RF-classformula} instead of passing NNF circuits. Second, instead of interpreting the max/min operations of a sorting network as disjoin/conjoin circuit constructors, we interpret them as the operations 
$\Call{Apply}{.,.,.,\vee}$ and $\Call{Apply}{.,.,.,\wedge}$ of Algorithm~\ref{alg:Apply} --- no longer constant-time operations. This way, the formulas $F^{ij}(T_1, \ldots, T_n)$ computed by
$\Call{Sortnet}{.,.}$
on Lines~\ref{ln:sortnet-out1} and~\ref{ln:sortnet-out2} of Algorithm~\ref{alg:RF-classformula}
will be decision graphs that are weak test-once.
If we further implement the conjoin operation on Line~\ref{ln:conjoin-F}
of Algorithm~\ref{alg:RF-classformula}
using $\Call{Apply}{.,.,.,\wedge}$, the 
computed class formulas by Algorithm~\ref{alg:RF-classformula} will be weak test-once decision graphs. We refer to the result as
Algorithm~\ref{alg:RF-classformula}A and use it 
when computing contrastive explanations. 
But we skip this step and refer to the result as Algorithm~\ref{alg:RF-classformula}B
for other tasks
since complete and general reasons can be computed efficiently over conjunctions of weak test-once decision graphs~\cite{DBLP:conf/aaai/DarwicheJ22,DBLP:conf/jelia/JiD23}.

\section{Computing the Robustness of Decisions}\label{sec:robustness}

\begin{algorithm}[tb]
\begin{footnotesize}
\caption{Compute robustness
\label{alg:Robustness-vars}}
\begin{algorithmic}[1]
\Require complete reason $\Delta$ in the form of an or-decomposable NNF circuit
\Ensure robustness $r$ and variables set $V$
\Function{Robustness}{$\Delta$}
\If{$\Delta$ is true}
\Return $\infty$, $\emptyset$
\ElsIf{$\Delta$ is false}
\Return $0$, $\{\emptyset\}$
\ElsIf{$\Delta$ is a literal}
\Return $1$, $\{\SetV(\Delta)\}$
\ElsIf{$\Delta$ is $\alpha \wedge \beta$}
\State $r_1, V_1 \gets \Call{Robustness}{\alpha}$
\State $r_2, V_2 \gets \Call{Robustness}{\beta}$
\If{$r_1 < r_2$}
\Return $r_1, V_1$
\ElsIf{$r_1 > r_2$}
\Return $r_2, V_2$
\EndIf
\Return $r_1, V_1 \cup V_2$
\ElsIf{$\Delta$ is $\alpha \vee \beta$}
\State $r_1, V_1 \gets \Call{Robustness}{\alpha}$
\State $r_2, V_2 \gets \Call{Robustness}{\beta}$
\State \Return $r_1 + r_2$, $\{v_1 \cup v_2 \mid v_1 \in V_1, v_2 \in V_2\}$
\EndIf
\EndFunction
\end{algorithmic}
\end{footnotesize}
\end{algorithm}

\begin{algorithm}[tb]
\begin{footnotesize}
\caption{Convert general reason into shortest clauses
\label{alg:sCNF}}
\begin{algorithmic}[1]
\Require general reason NNF $\Delta$, variables set $V$
\Ensure CNF $\Gamma$ with only the shortest clauses
\Function{sCNF}{$\Delta$, $V$}
\If{$\Delta$ is true or false}
\Return $\Delta$
\ElsIf{$\Delta$ is a literal}
\If{$\SetV(\Delta) \subseteq v_i$ for some $v_i \in V$}
\Return $\Delta$
\EndIf
\State \Return false
\ElsIf{$\Delta$ is $\alpha \wedge \beta$}
\State $\Gamma_1 \gets \Call{sCNF}{\alpha, V}$, $\Gamma_2 \gets \Call{sCNF}{\beta, V}$
\State $\Gamma = \Gamma_1 \wedge \Gamma_2$
\ElsIf{$\Delta$ is $\alpha \vee \beta$}
\State $\Gamma_1 \gets \Call{sCNF}{\alpha, V}$, $\Gamma_2 \gets \Call{sCNF}{\beta, V}$
\State $\Gamma =$ true
\For{every clause $c_1$ in $\Gamma_1$}
\For{every clause $c_2$ in $\Gamma_2$}
\State $c = c_1 \vee c_2$
\If{$\SetV(c) \subseteq v_i$ for some $v_i \in V$}
 $\Gamma = \Gamma \wedge c$ 
\EndIf
\EndFor
\EndFor
\EndIf
\State Remove subsumed clauses in $\Gamma$
\State \Return $\Gamma$
\EndFunction
\end{algorithmic}
\end{footnotesize}
\end{algorithm}

It can be quite beneficial to know whether the classification of an instance is robust to instance perturbations. The robustness of a decision on an instance $\instance$ in class $c$ was defined in~\cite{DBLP:conf/pgm/ShihCD18} as the minimum number of features needed to flip instance $\instance$ to a different class $c'$ --- it was also shown that robustness can be computed efficiently when the class formula is an Ordered Decision Diagram. We will next adopt this definition and show how to compute robustness based on the complete reason of a decision. We will further provide an algorithm for computing all ways in which a decision can be flipped using a smallest number of feature changes, based on the decision's general reason. Our proposed algorithms can therefore be applied to random forest classifiers, based on the results in Section~\ref{sec:sortnet} which facilitate the computation of complete and general reasons. 

Without loss of generality, we assume that every instance belongs to a single class in this section --- if an instance belongs to multiple classes, we can simply merge these classes. We start with some definitions. For instances $\instance$ and $\instance'$, we will use $\Dvars(\instance, \instance')$ to denote the set of variables that $\instance$ and $\instance'$ disagree on. 

\begin{definition}
Let $\instance$ be an instance, $\Delta$ be its class formula, $\sigma$ be a clause implied by $\instance,$
and let $\instance'$ be an instance such that $\vars(\sigma) = \Dvars(\instance, \instance').$
We say that $\instance'$ is a $\sigma$-violation of $\instance$ if $\instance' \not \models \sigma,$
and that $\instance'$ flips the decision on $\instance$ if $\instance' \not \models \Delta.$ 
\end{definition}

We also call a $\sigma$-violation $\instance'$ a ``violation identified by $\sigma$,'' and a ``flip identified by $\sigma$'' if it further flips the decision.

\begin{definition}
Let $\instance$ be an instance with class formula $\Delta.$ We say instance $\instance'$ is a shortest flip to the decision on $\instance$ if $\instance' \not \models \Delta$ and there does not exist another instance $\instance''$ such that $\instance'' \not \models \Delta$ and $\lvert \Dvars(\instance, \instance'') \rvert < \lvert \Dvars(\instance, \instance') \rvert$. 
\end{definition}

We next extend the definition of robustness in~\cite{DBLP:conf/pgm/ShihCD18} from binary to discrete.

\begin{definition}\label{def:robustness}
Let $\instance$ be an instance, and $\instance'$ be a shortest flip to the decision on $\instance$. The robustness of the decision on $\instance$ is $\lvert \Dvars(\instance, \instance') \rvert$.
\end{definition}
The robustness of a decision can be computed using necessary reasons. Recall that a necessary reason (NR) for a decision is a minimal property of the instance --- in the form of a clause --- that flips the decision if violated appropriately (by changing the instance). We next provide a proposition together with a lemma from~\cite{DBLP:conf/jelia/JiD23} to illustrate how robustness can be computed from necessary reasons.

\begin{algorithm}[tb]
\begin{footnotesize}
\caption{Compute shortest GNRs
\label{alg:sGNRs}}
\begin{algorithmic}[1]
\Require general reason NNF $\Delta$, variables set $V$
\Ensure a set $s$ contains all shortest GNRs 
\Function{sGNR}{$\Delta, V$}
\State $\Gamma = \Call{sCNF}{\Delta, V}$
\State $s \gets \emptyset$
\For{every $v_i \in V$}
\State $s_i \gets \{c | c \text{ is a clause in $\Gamma$ with } \SetV(c) = v_i\}$
\For{every variable $v$ in $v_i$}
\For{every pair of $c_1, c_2 \in s_i$}
\State $c \gets$ the resolvent of $c_1$ and $c_2$ on variable $v$
\If{$c$ is not subsumed by some clause in $s_i$}
\State Add $c$ to $s_i$
\State Remove clauses subsumed by $c$ in $s_i$
\EndIf
\EndFor
\EndFor
\State $s \gets s \cup s_i$
\EndFor
\State \Return $s$
\EndFunction
\end{algorithmic}
\end{footnotesize}
\end{algorithm}

\begin{lemma}\label{lemma:sNR}
Let $\sigma$ be an NR for the decision on instance $\instance$. Then there is a $\sigma$-violation that is a flip to the decision on $\instance$.
\end{lemma}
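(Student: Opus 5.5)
The plan is to work through the complete reason $\Gamma$ of the decision on $\instance$. Its class formula is $\Delta$, and by~\cite{DBLP:conf/aaai/DarwicheJ22} the NR $\sigma$ is a prime implicate of $\Gamma$. Since $\Gamma$ is an NNF whose literals are all contained in $\instance$, $\sigma$ is (equivalent to) a disjunction of simple literals of $\instance$, one for each variable in $\vars(\sigma)$. Any world falsifying $\sigma$ therefore disagrees with $\instance$ on every variable of $\vars(\sigma)$.

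I will use the semantic characterization of the complete reason from~\cite{DBLP:conf/ecai/DarwicheH20,DBLP:conf/lics/Darwiche23}. For a world $\w$, let $A(\w)$ be the set of variables on which $\w$ agrees with $\instance$. Then
\[
\w \models \Gamma \text{ iff every world } \w' \text{ with } A(\w) \subseteq A(\w') \text{ satisfies } \Delta .
\]
Equivalently, $\Gamma$ is the result of universally quantifying away everything in $\Delta$ except the literals of $\instance$. I expect this to be the main point needing care: the paper only quotes complete reasons informally, so I would state this characterization explicitly and cite it rather than re-derive it. A consequence is that whether $\w \models \Gamma$ depends only on $A(\w)$.

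The first step produces a flip. Take any world $\w$ that agrees with $\instance$ outside $\vars(\sigma)$ and disagrees on all of $\vars(\sigma)$, so that $\w \not\models \sigma$. Because $\Gamma \models \sigma$, we get $\w \not\models \Gamma$. By the characterization, there is a world $\w'$ with $A(\w) \subseteq A(\w')$ and $\w' \not\models \Delta$. Among all such $\w'$, choose one whose agreement set $U = A(\w')$ is maximal. We know $U \supseteq \Sigma \setminus \vars(\sigma)$, and it remains to show that $U$ is exactly $\Sigma \setminus \vars(\sigma)$.

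The second step uses primality. Let $\rho$ be the clause consisting of the literals of $\instance$ on the variables outside $U$; these variables lie in $\vars(\sigma)$, so $\rho \models \sigma$. I claim $\Gamma \models \rho$. Indeed, any world $\w''$ falsifying $\rho$ has $A(\w'') \subseteq U = A(\w')$. Since $\w' \not\models \Delta$, the characterization gives $\w'' \not\models \Gamma$. Thus $\rho$ is an implicate of $\Gamma$ with $\rho \models \sigma$, and primality of $\sigma$ forces $\rho \equiv \sigma$, i.e.\ $U = \Sigma \setminus \vars(\sigma)$. Note that $\rho$ is nonempty, since otherwise $\Gamma$ would be false, contradicting $\instance \models \Gamma$.

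It follows that $\Dvars(\instance, \w') = \vars(\sigma)$ and $\w'$ falsifies every literal of $\sigma$, so $\w'$ is a $\sigma$-violation. Since also $\w' \not\models \Delta$, it is a flip to the decision on $\instance$, which is the claim of Lemma~\ref{lemma:sNR}. Maximality of $U$ was in fact not needed; any witness $\w'$ works by the same argument.
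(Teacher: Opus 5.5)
Your proof is correct, but it cannot follow the paper's proof, because the paper does not prove this lemma. It is quoted from \cite{DBLP:conf/jelia/JiD23} alongside Lemma~\ref{lemma:sGNR}. The NR statement that the appendix does prove, Proposition~\ref{prop:sNR}, is obtained indirectly: it takes a GNR from Proposition~\ref{prop:sGNR} and projects it onto the states of $\instance$ via Proposition~11 of \cite{DBLP:conf/jelia/JiD23}. The paper's treatment of NRs therefore runs through the general-reason/GNR machinery.

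Yours is a self-contained semantic argument that uses only the complete reason. The agreement-set characterization gives a counterexample $\w'$ above the world that agrees with $\instance$ exactly outside $\vars(\sigma)$. Primality of $\sigma$, applied to the implicate $\rho$, then forces $\Dvars(\instance,\w')=\vars(\sigma)$ exactly. That primality step is the real content of the lemma.

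Your closing remark that maximality is unnecessary in fact proves something slightly stronger than the lemma. Any flip whose changed variables lie within $\vars(\sigma)$ must change all of them, and so is automatically a $\sigma$-violation.

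The characterization you flagged as needing care is sound in the non-binary case. The quantification correctly ranges over all $\w'$ with $A(\w)\subseteq A(\w')$, not just over worlds that mix the states of $\w$ and $\instance$. A one-line justification: $\Gamma$ is equivalent to the disjunction of its prime implicants, the SRs, which are subsets of $\instance$. Hence $\w\models\Gamma$ iff the part of $\instance$ on $A(\w)$ implies $\Delta$, which is exactly your condition.

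The paper's citation buys brevity and a presentation parallel to the GNR lemma. Your route buys independence from general reasons and a transparent explanation of why minimality of an NR pins down the set of variables that must be flipped.
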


\begin{proposition}\label{prop:sNR}
Let $\instance$ be an instance, and $\instance'$ be a shortest flip to the decision on $\instance$. Then there must be an NR $\sigma$ such that $\instance'$ is a $\sigma$-violation of $\instance$.
\end{proposition}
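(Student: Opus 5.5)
Let $X = \Dvars(\instance,\instance')$. We look for an NR whose variables are exactly $X$ and which $\instance'$ falsifies. The natural candidate is the clause $\sigma_0 = \bigvee_{V \in X} v$, where $v$ is the simple literal for the state of $V$ in $\instance$. By construction $\instance \models \sigma_0$, $\instance' \not\models \sigma_0$, and $\vars(\sigma_0) = X$. Since $\sigma_0$ contains only literals contained in the instance, it is a legitimate candidate for an implicate of the complete reason.

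We then argue that $\sigma_0$ is an implicate of the complete reason $\Gamma$ of the decision on $\instance$, and use that $\Gamma$ has the same behaviour as $\Delta$ on flips. Suppose some world $\w$ satisfies $\Gamma$ but falsifies $\sigma_0$. Then $\w$ disagrees with $\instance$ on every variable of $X$. Recall that the complete reason is an NNF over literals of $\instance$ that is sufficient and necessary for the decision. Hence its truth at $\w$ depends only on which variables $\w$ agrees with $\instance$ on, and $\Gamma$ is monotone in the set of agreeing variables. From $\w \not\models \sigma_0$ and monotonicity, the world $\instance'$ agrees with $\instance$ on a superset of the variables $\w$ agrees on. Here we use that $\instance'$ agrees with $\instance$ exactly off $X$, and that $\w$ agrees at most off $X$. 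So $\instance' \models \Gamma$, and therefore $\instance' \models \Delta$, contradicting that $\instance'$ is a flip. Thus $\Gamma \models \sigma_0$.

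Since $\sigma_0$ is an implicate of $\Gamma$, it is subsumed by some prime implicate, which is an NR by the earlier characterization. Call it $\sigma$, with $\sigma \models \sigma_0$. Its literals are simple literals of $\instance$, so $\sigma$ is a subclause of $\sigma_0$ with $\vars(\sigma) \subseteq X$. It remains to show $\vars(\sigma) = X$. This is the main obstacle, and shortestness is used here. If $\vars(\sigma) \subsetneq X$, Lemma~\ref{lemma:sNR} gives a $\sigma$-violation $\instance''$ that flips the decision, with $\Dvars(\instance,\instance'') = \vars(\sigma)$. Then $\lvert \Dvars(\instance,\instance'')\rvert < \lvert X \rvert$, contradicting that $\instance'$ is a shortest flip. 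Hence $\sigma = \sigma_0$, and $\instance' \not\models \sigma$, so $\instance'$ is a $\sigma$-violation.

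The delicate point is the monotonicity/"truth depends only on agreement" property of the complete reason used in the second paragraph. I would cite it from~\cite{DBLP:conf/ecai/DarwicheH20,DBLP:conf/lics/Darwiche23}, where the complete reason is shown to be monotone in the instance literals and equivalent to $\Delta$ on worlds obtained by modifying $\instance$. If that citation is unavailable, an alternative avoids it. Define $\sigma_0$ as above and show directly that every world falsifying $\sigma_0$ and agreeing with $\instance$ off $X$ relates to the decision, using the definition of NRs via the universal literal quantification in~\cite{DBLP:conf/aaai/DarwicheJ22}.
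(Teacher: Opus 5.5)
Your proof is correct, but it reaches the NR by a different route from the paper. The paper derives Proposition~\ref{prop:sNR} from Proposition~\ref{prop:sGNR}. It takes the GNR $\sigma'$ of which $\instance'$ is a violation, keeps only the states of $\instance$ that occur in $\sigma'$, and invokes Proposition~11 of~\cite{DBLP:conf/jelia/JiD23} to conclude that this projection is an NR. It then uses Lemma~\ref{lemma:sNR} and shortestness to show that no variable was lost. You instead work directly with the complete reason $\Gamma$. You show that the instance clause $\sigma_0$ over $\Dvars(\instance,\instance')$ is an implicate of $\Gamma$, using only monotonicity and $\Gamma \models \Delta$, and without shortestness. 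Hence $\sigma_0$ is subsumed by a prime implicate, i.e.\ an NR, and the same Lemma~\ref{lemma:sNR} argument pins down its variable set. This is essentially the paper's own proof of Proposition~\ref{prop:sGNR}, carried over from the general reason to the complete reason. Your route is self-contained: it needs only the syntactic form of the complete reason and the prime-implicate characterization of NRs, and it avoids the GNR machinery (Definition~8 and Proposition~11 of~\cite{DBLP:conf/jelia/JiD23}). It also makes clear that shortestness is used only to exclude an NR on strictly fewer variables. The paper's route is shorter once Proposition~\ref{prop:sGNR} is available, and it directly links the NR and GNR attached to the same shortest flip, which Lemma~\ref{lemma:universal-v} then uses.

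Three small remarks. First, the step you call delicate needs no citation. $\Gamma$ is negation-free, and each of its literals is the state of some variable $V$ in $\instance$, which is true at a world iff that world agrees with $\instance$ on $V$. So monotonicity in the agreement set is immediate, and beyond it you use only sufficiency, $\Gamma \models \Delta$.

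Second, your closing description of the complete reason as ``equivalent to $\Delta$ on worlds obtained by modifying $\instance$'' is false, so state only the implication. Take $\Delta = x_{12} \OR y_{12}$ and $\instance = x_1 \AND y_1$. The complete reason is $x_1 \OR y_1$, which the world $x_2 \AND y_2$ falsifies even though that world satisfies $\Delta$.

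Third, you do not need $\sigma = \sigma_0$. From $\sigma \models \sigma_0$ and $\instance' \not\models \sigma_0$ you already get $\instance' \not\models \sigma$, so only $\SetV(\sigma) = \Dvars(\instance,\instance')$ is required.
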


Based on Lemma~\ref{lemma:sNR} and Proposition~\ref{prop:sNR}, an NR $\sigma$ identifies a shortest flip to the decision iff $\sigma$ is also shortest, i.e., there does not exist another NR $\sigma'$ such that $\lvert \SetV(\sigma') \rvert < \lvert \SetV(\sigma) \rvert$. Therefore, the robustness of a decision can be computed as the length (number of features) of the shortest NRs for a decision.

Algorithm~\ref{alg:Robustness-vars} computes the robustness of a decision from the complete reason based on this approach. The algorithm computes the minimal length of the prime implicates of the complete reason based on the minimal length of the prime implicates of its children. This approach is viable since the complete reason is monotone and or-decomposable;\footnote{The complete reason is monotone; see, e.g.,~\cite{DBLP:conf/lics/Darwiche23}.
Moreover, it is or-decomposable if computed from a weak test-once decision graph using the closed form in~\cite{DBLP:conf/aaai/DarwicheJ22}.} see Proposition~12 in~\cite{DBLP:conf/aaai/DarwicheJ22}. Algorithm~\ref{alg:Robustness-vars} is very efficient: with appropriate caching, it runs in time linear in the size of the complete reason. Algorithm~\ref{alg:Robustness-vars} also computes a set $V$ that contains the sets of variables of the shortest NRs. That is, $V = \{\SetV(\sigma) | \sigma \text{ is a shortest NR}\}$.  The set $V$ will be utilized later for computing all the shortest flips of a decision.

Necessary reasons are useful for determining robustness, as shown above, but they have a shortcoming when identifying changes to an instance that will flip a decision.
In particular, an NR $\sigma$ only guarantees that at least one $\sigma$-violation is a flip to the decision but not all such violations. Consider an instance $\instance = \{x_1, y_1\}$ and its class formula $\Delta = x_{12} \OR y_{12}$, where the NR for the decision is $x_1 \OR y_1$. Instance $\instance' = \{x_2, y_2\}$ is a violation of the NR, but it does not flip the decision. This was addressed in~\cite{DBLP:conf/jelia/JiD23} by introducing the notion of a general reason for a decision, which is a better instance abstraction than the complete reason when some features are non-binary. From the general reason, we can compute \textit{general necessary reasons} (GNRs), which have stronger properties compared to NRs, as illustrated by the next proposition together with a lemma from~\cite{DBLP:conf/jelia/JiD23}.

\begin{lemma}\label{lemma:sGNR}
Let $\sigma$ be a GNR for a decision on $\instance$. Then every $\sigma$-violation is a flip to the decision.
\end{lemma}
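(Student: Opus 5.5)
Since the lemma is quoted from~\cite{DBLP:conf/jelia/JiD23}, the plan is to rederive it from the properties of the general reason recalled in Section~\ref{sec:Preliminaries}. Let $\Gamma$ be the general reason for the decision on $\instance$ with class formula $\Delta$. Two facts are needed: first, $\instance \models \Gamma$ and $\Gamma \models \Delta$; second, the general reason is the \emph{weakest} NNF whose literals are implied by $\instance$ and which implies $\Delta$. The second fact yields a characterization of $\Gamma$ as the disjunction of all terms $\tau$ with $\instance \models \tau$ (each literal of $\tau$ implied by $\instance$) and $\tau \models \Delta$. Equivalently, $\instance' \models \Gamma$ iff every instance that agrees with $\instance$ on the variables where $\instance$ and $\instance'$ agree, and otherwise lies in a ``box'' containing both, is in $\Delta$. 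I would take this characterization from~\cite{DBLP:conf/jelia/JiD23} rather than rederive it.

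Now let $\sigma$ be a GNR, so $\sigma$ is a prime implicate of $\Gamma$ that is variable-minimal. Let $\instance'$ be a $\sigma$-violation, meaning $\Dvars(\instance,\instance') = \SetV(\sigma)$ and $\instance' \not\models \sigma$. Suppose, for contradiction, that $\instance' \models \Delta$. The goal is to show that $\sigma$ is then not an implicate of $\Gamma$, or that it can be strengthened, or that it fails variable-minimality.

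The key construction is the term $\tau = \bigwedge_{X \notin \SetV(\sigma)} \instance[X] \wedge \bigwedge_{X \in \SetV(\sigma)} \ell_X$, where $\ell_X$ is the state $\instance'[X]$ together with $\instance[X]$. The idea is to compare the literals of $\sigma$ with the feasible states. Since $\sigma$ is an implicate of $\Gamma$ and $\instance' \not\models \sigma$, we get $\instance' \not\models \Gamma$. Applying the characterization, some instance $\instance''$ in the box spanned by $\instance$ and $\instance'$ must violate $\Delta$. One then argues, using primality and variable-minimality of $\sigma$, that each literal of $\sigma$ on $X$ must exclude exactly the states reachable by such flips; otherwise a resolvent or weakening of $\sigma$ on fewer variables or with smaller literals would remain an implicate. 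This forces $\instance' \not\models \Delta$, which is the desired contradiction.

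The main obstacle is this last step: connecting the syntactic minimality conditions of GNRs (prime plus variable-minimal) to the semantic statement that \emph{every} state outside each literal of $\sigma$ leads to a flip, rather than only some. The first thing I would try is to show that, for each $X \in \SetV(\sigma)$, the $X$-literal of $\sigma$ equals the set of states $x$ for which changing $\instance$ on $\SetV(\sigma)$ with $X = x$ stays within $\Gamma$. This would rely on the fact that $\Gamma$ is closed under the general-reason abstraction, so that membership in $\Gamma$ decomposes per variable over boxes.
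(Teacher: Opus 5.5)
\paragraph{Where the proposal stops short.} Your setup is sound. Using the box characterization of the general reason $\Gamma$, you get $\instance' \not\models \Gamma$ from $\Gamma \models \sigma$ and $\instance' \not\models \sigma$. Hence some $\instance''$ in the box spanned by $\instance$ and $\instance'$ has $\instance'' \not\models \Delta$. (The paper gives no proof of this lemma; it imports it from~\cite{DBLP:conf/jelia/JiD23}, so a rederivation is fine in principle.)

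The gap is the step that should turn this into the conclusion. It is only gestured at. Saying that primality and variable-minimality force each literal of $\sigma$ to ``exclude exactly the states reachable by such flips'' is not an argument. The characterization you propose to try first is not well defined: you want the $X$-literal of $\sigma$ to equal the set of states $x$ for which changing $\instance$ on $\SetV(\sigma)$ with $X=x$ stays in $\Gamma$. But whether such a change stays in $\Gamma$ depends on how the other variables of $\SetV(\sigma)$ are set. Even if some per-literal description held, the lemma concerns joint violations of all literals of $\sigma$ at once. Also, shrinking literals strengthens a clause rather than weakening it.

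\paragraph{The missing idea.} Look at where $\instance''$ differs from $\instance$. Let $U = \Dvars(\instance,\instance'')$.
\begin{itemize}
\item Every coordinate of a box element is either $\instance[X]$ or $\instance'[X]$, and these differ exactly on $\SetV(\sigma)=\Dvars(\instance,\instance')$. So $U \subseteq \SetV(\sigma)$, and $\instance''$ agrees with $\instance'$ on $U$ and with $\instance$ elsewhere.
\item $U \neq \emptyset$, since $\instance \models \Delta$.
\item Suppose $U \neq \SetV(\sigma)$. Let $\rho$ be the clause whose $X$-literal, for each $X \in U$, is the set of all states of $X$ other than $\instance''[X]$.
\item Then $\Gamma \models \rho$. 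Indeed, suppose $w \models \Gamma$ agreed with $\instance''$ on $U$. Then $\instance''$ would lie in the box spanned by $\instance$ and $w$, so the characterization would give $\instance'' \models \Delta$, a contradiction.
\item Hence some prime implicate $\sigma'$ of $\Gamma$ satisfies $\sigma' \models \rho$. Since $\rho$ is not valid, $\SetV(\sigma') \subseteq U$, so $\SetV(\sigma')$ is a strict subset of $\SetV(\sigma)$. It is nonempty because $\instance \models \Gamma$. This contradicts the variable-minimality of $\sigma$.
\item Therefore $U = \SetV(\sigma)$, i.e., $\instance'' = \instance'$, and $\instance' \not\models \Delta$.
\end{itemize}
This argument uses only that $\sigma$ is an implicate of $\Gamma$ and is variable-minimal. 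Primality of $\sigma$, resolvents, and any per-literal analysis play no role.
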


\begin{proposition}\label{prop:sGNR}
Let $\instance$ be an instance, and $\instance'$ be a shortest flip to the decision on $\instance$. Then there must be a GNR $\sigma$ such that $\instance'$ is a $\sigma$-violation of $\instance.$
\end{proposition}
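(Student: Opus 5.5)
Let $\Delta$ denote the class formula of the decision on $\instance$ and $\Gamma$ the general reason. My plan is to build the clause $\sigma$ directly from the shortest flip $\instance'$ and then show it contains a GNR with the same variables. Let $X = \Dvars(\instance, \instance')$. For each $Y \in X$, let $l_Y$ be the $Y$-literal containing every state of $Y$ except $\instance'[Y]$; note $\instance \models l_Y$ because $\instance[Y] \neq \instance'[Y]$. Set $\sigma = \bigvee_{Y \in X} l_Y$. Then $\instance \models \sigma$, $\instance' \not\models \sigma$, and $\vars(\sigma) = X = \Dvars(\instance,\instance')$. So $\instance'$ is a $\sigma$-violation of $\instance$.

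Next I would show that $\sigma$ is an implicate of the general reason $\Gamma$. I will use the property of the general reason from~\cite{DBLP:conf/jelia/JiD23} that it is an instance abstraction whose models are exactly the instances $\instance''$ such that every instance agreeing with $\instance''$ where it matters is still in class $\Delta$. The concrete form I expect to need is: $\instance'' \models \Gamma$ iff every $\instance'''$ that agrees with $\instance''$ on features where $\instance''$ differs from... To avoid depending on the exact characterization, I would reason contrapositively. Suppose $\Gamma \not\models \sigma$. Then some world $\w \models \Gamma$ falsifies $\sigma$, so $\w[Y] = \instance'[Y]$ for all $Y \in X$. I would then invoke the defining property of the general reason: for any variable $Y$ where the model $\w$ of $\Gamma$ and $\instance$ differ, $\Gamma$ only contains literals implied by $\instance$. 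Moving $\w$ back to $\instance$ outside $X$ should therefore preserve both membership in $\Gamma$ and falsification of $\sigma$. This yields that $\instance'$ itself, or an instance differing from $\instance$ only on $X$, satisfies $\Gamma$ and hence $\Delta$. Minimality of the flip then gives the contradiction. This is the step I expect to be the main obstacle, since it requires pinning down exactly which semantic property of the general reason, namely $\Gamma \models \Delta$ plus the appropriate locality/closure condition, licenses this argument.

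Given $\Gamma \models \sigma$, there is a prime implicate of $\Gamma$, and indeed a variable-minimal one $\sigma^*$, that implies $\sigma$, so $\vars(\sigma^*) \subseteq X$. By Lemma~\ref{lemma:sGNR}, every $\sigma^*$-violation is a flip. If $\vars(\sigma^*) \subsetneq X$, I would take $\instance''$ to agree with $\instance'$ on $\vars(\sigma^*)$ and with $\instance$ elsewhere. Because $\sigma^* \models \sigma$, each literal of $\sigma^*$ on $Y$ is contained in $l_Y$ and so excludes $\instance'[Y]$. Hence $\instance'' \not\models \sigma^*$, making $\instance''$ a $\sigma^*$-violation and therefore a flip with fewer differing variables. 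That contradicts $\instance'$ being shortest. So $\vars(\sigma^*) = X$, and then $\instance' \not\models \sigma^*$ by the same literal-containment argument. Thus $\instance'$ is a $\sigma^*$-violation for the GNR $\sigma^*$.
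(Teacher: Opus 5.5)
Your clause $\sigma=\bigvee_{Y\in X} l_Y$ is exactly the paper's $\neg\tau$. Your plan is also the paper's: find a GNR implying $\sigma$, then use shortness of $\instance'$ to force equal variable sets. The difference is that the paper observes $\neg\tau$ meets the requirements of Definition~8 in~\cite{DBLP:conf/jelia/JiD23}, while you go through prime implicates of the general reason $\Gamma$.

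The step you flagged as the main obstacle is in fact fine. Every literal of $\Gamma$ contains $\instance$'s state, and an NNF is monotone in its literals, so resetting variables of a model of $\Gamma$ to their values in $\instance$ keeps it a model. A model falsifying $\sigma$ agrees with $\instance'$ on $X$. Resetting it outside $X$ gives $\instance'\models\Gamma\models\Delta$, which contradicts $\instance'$ being a flip; shortness is not needed here.

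The genuine gap is the next claim, that some \emph{variable-minimal} prime implicate $\sigma^*$ implies $\sigma$. This is false in general. Take ternary variables $A,B$, $\instance=a_1\wedge b_1$, and $\Delta=a_{12}\wedge(a_{13}\vee b_{12})$. All its literals contain $\instance$'s states, so its general reason is equivalent to $\Delta$. The prime implicates are $a_{12}$ and $a_1\vee b_{12}$, so the only GNR is $a_{12}$. For the non-shortest flip $a_2\wedge b_3$, your clause is $a_{13}\vee b_{12}$. It is implied by $a_1\vee b_{12}$ but not by $a_{12}$. So the existence of $\sigma^*$ already depends on shortness, yet you invoke shortness only afterwards, once $\sigma^*$ is assumed to be a GNR.

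To repair this, use shortness before choosing $\sigma^*$:
\begin{itemize}
\item Take any prime implicate $\sigma_0$ of $\Gamma$ with $\sigma_0\models\sigma$, so $\SetV(\sigma_0)\subseteq X$.
\item Suppose $\sigma_0$ were not variable-minimal. Then some prime implicate $\sigma_1$ has $\SetV(\sigma_1)\subset\SetV(\sigma_0)$. Among the prime implicates whose variables lie in $\SetV(\sigma_1)$, one with an inclusion-minimal variable set is a GNR $\sigma_2$ with $\lvert\SetV(\sigma_2)\rvert<\lvert X\rvert$.
\item Every literal of a prime implicate of $\Gamma$ contains $\instance$'s state. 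Otherwise, by the same closure property, deleting that literal would leave a strictly stronger implicate.
\item So changing $\instance$ on each $Y\in\SetV(\sigma_2)$ to a state outside $\sigma_2$'s $Y$-literal gives a $\sigma_2$-violation. By Lemma~\ref{lemma:sGNR} it is a flip changing fewer variables than $\instance'$, a contradiction.
\end{itemize}
Hence $\sigma_0$ is a GNR, and your last paragraph then correctly yields $\SetV(\sigma_0)=X$ and $\instance'\not\models\sigma_0$. The paper's one-line step ``there must be a GNR $\sigma\models\neg\tau$'' tacitly relies on the same use of shortness.
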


Similar to NRs, a GNR $\sigma$ identifies shortest flips to the decision iff $\sigma$ is also shortest. The main difference is that Lemma~\ref{lemma:sGNR} ensures that every violation of a GNR is a flip to the decision. It then follows that the set of all shortest flips to a decision is exactly the set of all violations identified by the shortest GNRs. For the instance $\instance = \{x_1, y_1\}$ and its class formula $\Delta = x_{12} \OR y_{12}$, discussed above, the GNR for the decision is $x_{12} \OR y_{12}$, which tells us what the shortest flips are (i.e., all changes to the instance which violate GNR $x_{12} \OR y_{12}$). 

Algorithm~\ref{alg:sGNRs} uses Algorithm~\ref{alg:sCNF} as a helper and computes all shortest GNRs for a decision given the general reason $\Delta$ and the set $V$ computed by Algorithm~\ref{alg:Robustness-vars}. Algorithm~\ref{alg:sGNRs} uses the techniques in~\cite{DBLP:conf/jelia/JiD23} to compute all shortest GNRs, by first converting the general reason into a CNF and then closing it under resolution (while removing subsumed clauses). The helper Algorithm~\ref{alg:sCNF} converts the general reason $\Delta$ into a CNF with only the shortest clauses, which are clauses that contain only variables in the set $V$ obtained by Algorithm~\ref{alg:Robustness-vars}.
Then Algorithm~\ref{alg:sGNRs} divides the computed clauses into different sets based on their variables and closes each set under resolution. Moreover, Algorithm~\ref{alg:sGNRs} closes a set of clauses under resolution by assuming a variable order, resolving the clauses on a variable exhaustively, and then moving on to the next variable. This method is called \textit{variable depletion} and can be used to close a set of clauses under resolution in propositional logic more efficiently; see, e.g.,~\cite{Crama2011BooleanF}.\footnote{Our implementation of Algorithms~\ref{alg:sCNF} and~\ref{alg:sGNRs} includes some additional optimizations that are left out due to space limitations.} 

In summary, given a random forest classifier $R$ and an instance $\instance$ in class $c_i$, we can first compile the class formula $R^i$ as a tractable circuit using Algorithm~\ref{alg:RF-classformula}B. We can then compute the complete reason and the general reason for the decision in time linear to the size of $R^i$ using the closed forms introduced in~\cite{DBLP:conf/aaai/DarwicheJ22,DBLP:conf/jelia/JiD23}. Furthermore, all shortest flips to the decision on instance $\instance$ can be computed using Algorithm~\ref{alg:sGNRs} from the general reason and the set $V$ computed by Algorithm~\ref{alg:Robustness-vars}. Algorithm~\ref{alg:sCNF} simplifies the general reason by removing the non-shortest clauses, therefore, improving the efficiency of Algorithm~\ref{alg:sGNRs}. We will evaluate the performance of Algorithm~\ref{alg:sGNRs} empirically in Section~\ref{sec:experiments}. The proofs of the results in this section can be found in the Appendix. 

\section{Empirical Evaluation}\label{sec:experiments}

\begin{table*}[tb]
    \centering
    \begin{tabular}{l r r r r r r r r r r r}
    \toprule
    \multirow{2}{*}{\textbf{Dataset}}  & \multirow{2}{*}{\textbf{\#F}}  & \multirow{2}{*}{\textbf{\#C}} & \multirow{2}{*}{\textbf{\#I}}  & \multicolumn{4}{c}{\textbf{RF}} & \multicolumn{2}{c}{\textbf{NNF}} & \multicolumn{2}{c}{\textbf{ADD}} \\ \cmidrule(lr){5-8} \cmidrule(lr){9-10} \cmidrule(lr){11-12} 
    & & & & \textbf{\#T} &  \textbf{D} & \multicolumn{1}{c}{\textbf{\#$\text{N}_{\text{RF}}$}} & \multicolumn{1}{c}{\textbf{\%A}} & \multicolumn{1}{c}{\textbf{\#$\text{N}_{\text{NNF}}$}} &  \multicolumn{1}{c}{\textbf{$\text{T}_{\text{NNF}}$}} & \multicolumn{1}{c}{\textbf{\#$\text{N}_{\text{ADD}}$}} & \multicolumn{1}{c}{\textbf{$\text{T}_{\text{ADD}}$}}\\
    \midrule
    ann-thyroid  & 21 & 3 & 566 & 25 & 4 & 527 & 95.05 & 3529 & 0.20 & 210832 & 46.40 \\
    appendicitis & 7 & 2 & 16 & 50 & 4 & 630 &  68.75 & 1792 & 0.07  & 1118961 & 444.73 \\
    banknote & 4 & 2 & 206 & 100 & 4& 1996 & 96.60 & 5887 & 0.20 & 57066 & 64.39 \\
    ecoli & 7 & 5 & 49 & 100 & 4 &  1117 & 87.76 & 83106 & 7.11 & 1109022 & 3296.77\\
    glass2 & 9 & 2 & 24 & 25 & 4 & 465 & 83.33 & 965 & 0.04 & 420024 & 39.84 \\
    ionosphere & 34 & 2 & 53 & 15 & 4 & 53 & 84.91 & 504 & 0.01 & 264003 & 689.35 \\
    iris & 4 & 2 & 22 & 100 & 4 & 1084 & 95.45 & 25125 & 2.05 & 1802 & 2.58 \\
    magic & 10 & 2 & 2853 & 25 & 4 & 755 & 82.05 & 1370 & 0.06 & 628529 & 77.61\\
    mofn-3-7-10 & 10 & 2 & 199 & 100 & 4 & 2904 & 80.9 & 4100 & 0.21 & 15 & 0.16 \\
    new-thyroid & 5 & 3 & 32 & 100 & 4 & 1298 & 93.75 & 26188 & 2.35 & 94747 & 149.06 \\
    phoneme & 5 & 2 & 811 & 100 & 4 & 2870 & 80.39 & 6199 & 0.30 & 1410646 & 1788.00 \\
    ring & 20 & 2 & 1110 & 25 & 4 & 615 & 87.48 & 641 & 0.02 & 190706 & 150.60 \\
    segment & 19& 7& 346 & 15 & 4& 337 & 94.51 & 8734 & 0.55 & 9216490 & 1630.89 \\
    shuttle & 9& 7& 8700 & 50& 4& 1294 & 99.82 & 59899 & 5.13 & 477618 & 324.88 \\
    threeOf9 & 9 & 2 & 77 & 100 & 4 & 2830 & 98.70 & 4680 & 0.25 & 3 & 0.12 \\
    twonorm & 20 & 2 & 1110 & 15 & 4 & 465 & 90.63 & 835 & 0.01 & 9286188 & 2246.47 \\
    waveform-21 & 21 & 3 & 750 & 15 & 4 & 465 & 82.40 & 2202 & 0.08 & 33892041 & 593.29\\
    wine-recog & 13 & 3 & 27 & 25 & 4 & 403 & 100.00 & 3836 & 0.24 & 9124186 & 2544.07 \\
    xd6 & 9 & 2 & 146 & 100 & 4 & 2858 & 95.89 & 4610 & 0.20 & 46 & 0.22  \\
    \bottomrule
    \end{tabular}
    \caption{Comparing Algorithm~\ref{alg:RF-classformula} with results in~\cite{DBLP:conf/icaart/MurtoviSS25}. \textbf{\#F} is the number of features; \textbf{\#C} is the number of classes; \textbf{\#I} is the number of test instances. \textbf{\#T} is the number of trees, \textbf{D} is the max depth, \textbf{\#$\text{N}_{\text{RF}}$} is the number of nodes, and \textbf{\%A} is the classifier test accuracy of the random forest. \textbf{\#$\text{N}_{\text{NNF}}$} is the total number of nodes in the NNFs circuits of the random forest classifier for all classes, and \textbf{$\text{T}_{\text{NNF}}$} is the total time (in seconds) to generate them. \textbf{\#$\text{N}_{\text{ADD}}$} is the number of nodes of the ADD generated from the random forest classifier, and \textbf{$\text{T}_{\text{ADD}}$} is the time (in seconds) used to generate the ADD. The data in columns \textbf{\#$\text{N}_{\text{ADD}}$} and \textbf{$\text{T}_{\text{ADD}}$} is taken from~\cite{DBLP:conf/icaart/MurtoviSS25}. \label{table:experiments-compilation}}
\end{table*}

\begin{table*}[tb]
    \centering
    \begin{tabular}{@{}l r r r r r r r r r r r r r r r r r r r r@{}}
    \toprule
    \multirow{2}{*}{\textbf{Dataset}} & \multicolumn{3}{c}{\textbf{RF}} & \multicolumn{4}{c}{\textbf{Class Formula}}  \\ 
    \cmidrule(lr){2-4} \cmidrule(lr){5-8}
    & \textbf{\#T}  & \multicolumn{1}{c}{\textbf{\#$\text{N}_{\text{RF}}$}} & \multicolumn{1}{c}{\textbf{\%A}} & \multicolumn{1}{c}{\textbf{\#$\text{N}_{\text{3}}$}} &  \multicolumn{1}{c}{\textbf{$\text{T}_{\text{3}}$}} & \multicolumn{1}{c}{\textbf{\#$\text{N}_{\text{3B}}$}} & \multicolumn{1}{c}{\textbf{$\text{T}_{\text{3B}}$}} \\
    \midrule
    ann-thyroid  & 100 & 2092 & 96.82  & 9407.3 & 0.7 & 293176.0 & 2227.6 \\
    appendicitis & 100 & 1292 & 68.75 & 2298.0 &  0.1  & 310011.0 &  2056.3 \\
    banknote & 100 & 1996 & 96.60 & 3079.5 & 0.1 & 1782.5 & 14.4  \\
    ecoli & 50 & 1310 & 87.76 & 7248.0 & 0.6 &  37856.8 & 191.6  \\
    glass2 & 50 & 932 & 79.17 & 1190.5 & 0.1 & 402211.0 & 1539.4 \\
    ionosphere & 16 & 270 & 83.02 & 288.0 & 0.1 & 581794.5 & 454.7  \\
    iris & 100 & 1084 & 95.45  & 8851.0 & 0.7  & 118.7 & 3.9   \\
    magic & 32 & 968 & 82.23 & 1001.0 & 0.1 & 437863.5 & 502.0  \\
    mofn-3-7-10 & 100 & 2904 & 80.90  & 2061.5 & 0.1  & 22.0 & 0.5  \\
    new-thyroid & 100 & 1298 & 93.75  &  9423.3 & 0.8 & 10634.0 & 167.6 \\
    phoneme & 64 & 1816 & 80.27 & 1848.5 & 0.1  & 46576.5 & 287.9 \\
    ring & 25 & 615 & 87.48 & 321.5 & 0.1 & 206400.5 & 511.9 \\
    segment & 12 & 272 & 93.35 & 1276.7 & 0.1 & 270979.1 & 115.7  \\
    shuttle & 50 & 1294 & 99.82 & 9987.0 & 0.7 & 29636.7 & 125.9  \\
    threeOf9 & 100 & 2830 & 98.70 & 2354.5 & 0.1 & 39.0 & 1.1 \\
    twonorm & 10 & 310 & 89.19 & 299.0 & 0.1  & 195234.5 & 73.8  \\
    waveform-21 & 9 & 279 & 82.00  & 551.3 & 0.1  & 341053.0 & 156.1 \\
    wine-recog & 25 & 403 & 100.00 & 1576.0 & 0.1 & 2560010.7 & 3154.9  \\
    xd6 & 100 & 2858 & 95.89 & 2317.5 & 0.1 & 42.5 & 1.7  \\
    \bottomrule
    \end{tabular}
    \caption{Evaluating Algorithms~\ref{alg:RF-classformula}B for generating class formulas as conjunctions of decision graphs (DGs). \textbf{\#T} is the number of trees, \textbf{\#$\text{N}_{\text{RF}}$} is the number of nodes, and \textbf{\%A} is the classifier test accuracy of the random forest.  \textbf{\#$\text{N}_{\text{3}}$} is the number of nodes in each NNFs circuit of the random forest classifier for each class, and \textbf{$\text{T}_{\text{3}}$} is the average time (in seconds) to generate each of them using Algorithm~\ref{alg:RF-classformula}. \textbf{\#$\text{N}_{\text{3B}}$} is the average number of nodes of each conjunction of DGs for each class. \textbf{$\text{T}_{\text{3B}}$} is the average time (in seconds) to generate each class formula as a conjunction of DGs using Algorithms~\ref{alg:RF-classformula}B. \label{table:experiments-nt-t1}}
\end{table*}

\begin{table*}[t]
    \centering
    \begin{tabular}{@{}l r r r r r r r r r r r r r r r r r r r r@{}}
    \toprule
    \multirow{2}{*}{\textbf{Dataset}} & \multicolumn{3}{c}{\textbf{RF}} &  \multicolumn{2}{c}{\textbf{SR}} & \multicolumn{2}{c}{\textbf{NR}} & \multicolumn{3}{c}{\textbf{sGNR}}  \\ 
    \cmidrule(lr){2-4} \cmidrule(lr){5-6} \cmidrule(lr){7-8} \cmidrule(lr){9-11} 
    & \textbf{\#T}  & \multicolumn{1}{c}{\textbf{\#$\text{N}_{\text{RF}}$}} & \multicolumn{1}{c}{\textbf{\%A}}  &  \multicolumn{1}{c}{\textbf{$\text{T}_{\text{SR}}$}} & \multicolumn{1}{c}{\textbf{\#$\text{C}_{\text{SR}}$}}  & \multicolumn{1}{c}{\textbf{$\text{T}_{\text{NR}}$}} & \multicolumn{1}{c}{\textbf{\#$\text{C}_{\text{NR}}$}} &  \multicolumn{1}{c}{\textbf{r}} & \multicolumn{1}{c}{\textbf{$\text{T}_{\text{sGNR}}$}} & \multicolumn{1}{c}{\textbf{\#$\text{C}_{\text{sGNR}}$}} \\
    \midrule
    ann-thyroid  & 100 & 2092 & 96.82   & 0.0281 & 3.50 & 0.0090 & 3.58 & 1.25 & 0.1906 & 1.87  \\
    appendicitis & 100 & 1292 & 68.75   & 0.0100  & 6.44 & 0.0037 & 8.69 & 1.88 & 1.9259 & 19.75  \\
    banknote & 100 & 1996 & 96.60 & 0.0001  & 1.09 & 0.0001 & 1.85 &  1.00 & 0.0094 & 1.75 \\
    ecoli & 50 & 1310 & 87.76 &  0.0001 & 2.08 & 0.0001 & 3.76 & 1.00 & 0.0268 & 2.24\\
    glass2 & 50 & 932 & 79.17 &  0.0035 & 5.46 & 0.0015 & 7.83 & 1.21 & 0.4143 & 5.75 \\
    ionosphere & 16 & 270 & 83.02  & 77.2518  & 761.00  & 2.8630  & 777.89 & 1.73 &  0.1581 & 7.69 \\
    iris & 100 & 1084 & 95.45   & 0.0001 & 1.00 &  0.0001 & 1.59 & 1.00 & 0.0014 & 1.68\\
    magic & 32 & 968 & 82.23 & 0.0376 & 5.97 & 0.0105 & 8.55 & 1.19 & 0.5213 & 10.12\\
    mofn-3-7-10 & 100 & 2904 & 80.90   & 0.0001 & 3.98 & 0.0001 & 3.58 &  1.90 & 0.0001 & 2.55  \\
    new-thyroid & 100 & 1298 & 93.75 & 0.0001 & 1.59 &  0.0001 & 2.72 & 1.06 & 0.0098 & 2.50  \\
    phoneme & 64 & 1816 & 80.27   &  0.0004  & 2.88 & 0.0002  & 3.30 & 1.42 & 0.1229 & 7.17\\
    ring & 25 & 615 & 87.48 & 7.3803 & 132.04 & 1.1468  & 236.86 & 1.21 & 0.0165 & 3.19 \\
    segment & 12 & 272 & 93.35  & 0.3216 & 28.91 & 0.0716 & 38.91 & 1.24 & 0.4629 & 6.94 \\
    shuttle & 50 & 1294 & 99.82  & 0.0015 & 3.78 & 0.0007 & 5.37 & 1.15 & 0.0351 & 3.11\\
    threeOf9 & 100 & 2830 & 98.70  & 0.0001 & 2.75 & 0.0001 & 4.35 & 1.16 & 0.0001 & 2.68 \\
    twonorm & 10 & 310 & 89.19   & 3.0597 & 122.36 & 0.2581 & 125.24 &  1.66 & 0.1566 & 9.30 \\
    waveform-21 & 9 & 279 & 82.00   & 0.7104 & 44.62 & 0.1132 & 48.27 & 1.28 & 0.0321 & 4.53 \\
    wine-recog & 25 & 403 & 100.00   & 1.2963 & 40.33 &  0.3260 & 44.96 & 1.41 & 3.7344 & 7.59 \\
    xd6 & 100 & 2858 & 95.89  & 0.0003 & 4.62 & 0.0001 & 4.83 & 1.29 & 0.0001 & 3.02 \\
    \bottomrule
    \end{tabular}
    \caption{Evaluating the enumeration of SRs and NRs and Algorithm~\ref{alg:sGNRs} based on the complete and general reason computed from the tractable class formulas generated using Algorithms~\ref{alg:RF-classformula}B. \textbf{\#T} is the number of trees, \textbf{\#$\text{N}_{\text{RF}}$} is the number of nodes, and \textbf{\%A} is the classifier test accuracy of the random forest.  \textbf{$\text{T}_{\text{SR}}$} and \textbf{$\text{T}_{\text{NR}}$} are the average time to compute the SRs and NRs for each explained instance. \textbf{\#$\text{C}_{\text{SR}}$} and \textbf{\#$\text{C}_{\text{NR}}$} are the average numbers of SRs and NRs for each explained instance. \textbf{r} is the average robustness, \textbf{$\text{T}_{\text{sGNR}}$} is the average time to compute the shortest GNRs using Algorithm~\ref{alg:sGNRs}, and \textbf{\#$\text{C}_{\text{sGNR}}$} is the average number of shortest GNRs for each instance. \label{table:experiments-nt-t2}}
\end{table*}

\begin{table*}[t]
    \centering
    \begin{tabular}{@{}l r r r r r r r r r r r r r r r r r r r r@{}}
    \toprule
    \multirow{2}{*}{\textbf{Dataset}} & \multicolumn{1}{c}{\textbf{RF}} &  \multicolumn{4}{c}{\textbf{Class Formula}} &   \multicolumn{2}{c}{\textbf{$\text{NR}$}} & \multicolumn{2}{c}{\textbf{CE}} \\ 
    \cmidrule(lr){2-2} \cmidrule(lr){3-6}  \cmidrule(lr){7-8} \cmidrule(lr){9-10} 
    & \textbf{\#T} & \multicolumn{1}{c}{\textbf{$\text{T}_{\text{3B}}$}} & \multicolumn{1}{c}{\textbf{$\text{T}_{\text{3A}-\text{3B}}$}} & \multicolumn{1}{c}{\textbf{\#$\text{N}_{\text{3B}}$}} & \multicolumn{1}{c}{\textbf{\#$\text{N}_{\text{3A}}$}} & \multicolumn{1}{c}{\textbf{$\text{T}_{\text{NR}}$}} & \multicolumn{1}{c}{\textbf{\#$\text{C}_{\text{NR}}$}}  & \multicolumn{1}{c}{\textbf{$\text{T}_{\text{CE}}$}} & \multicolumn{1}{c}{\textbf{\#$\text{C}_{\text{CE}}$}}\\
    \midrule
    ann-thyroid & 100  & 2227.6 & 20.1 & 293176.0 & 220347.0 & 0.0090 & 3.58 & 0.0023 & 3.96\\
    ecoli & 50 & 191.6 & 1.4 & 37856.8 & 13483.6  & 0.0001 &  3.76 & 0.0001 & 2.81\\
    new-thyroid & 100 & 167.6 & 0.1 & 10634.0 & 4940.7 & 0.0001 & 2.72 & 0.0001 & 2.50  \\
    segment & 12 &  115.7 & 48.7 & 270979.1 & 184414.9 & 0.0716 & 38.91 &  0.0598  & 49.40 \\
    shuttle & 50 & 125.9 & 1.1 & 29636.9 & 12777.7 & 0.0007 & 5.37  & 0.0002 & 2.76 \\
    waveform-21 & 9 & 156.1  & 27.2 & 341053.0 & 190343.7 &  0.1132 & 48.27 &  0.0898 & 96.16\\
    wine-recog & 25 & 3154.9 & 125.2 & 2560010.7 & 1506961.7 & 0.3260  & 44.96 &  0.2514  & 38.87 \\
    \bottomrule
    \end{tabular}
    \caption{Evaluating Algorithm~\ref{alg:RF-classformula}A for generating class formulas as weak test-once decision graphs. \textbf{\#T} is the number of trees in the random forest classifier. \textbf{$\text{T}_{\text{3B}}$} is the average time used to generate each class formula using Algorithm~\ref{alg:RF-classformula}B in Table~\ref{table:experiments-nt-t1} and~\ref{table:experiments-nt-t2}, and \textbf{$\text{T}_{\text{3A}-\text{3B}}$} is the additional compilation time Algorithm~\ref{alg:RF-classformula}A took compared to Algorithm~\ref{alg:RF-classformula}B. \textbf{\#$\text{N}_{\text{3B}}$} and \textbf{\#$\text{N}_{\text{3A}}$} are the average number of nodes of the class formulas obtained from Algorithm~\ref{alg:RF-classformula}B and Algorithm~\ref{alg:RF-classformula}A respectively. \textbf{$\text{T}_{\text{NR}}$} and \textbf{$\text{T}_{\text{CE}}$} are the average time used to compute the NRs and CEs from their corresponding complete reasons.  \textbf{\#$\text{C}_{\text{NR}}$} and \textbf{\#$\text{C}_{\text{CE}}$} are the average numbers of NRs and CEs for each explained instance (and each target class).\label{table:experiments-CE}}
\end{table*}

We next evaluate our approaches for compiling class formulas into (tractable) circuits, and for computing the robustness of decisions, all SRs, NRs, and CEs from complete reasons, and all shortest flips (shortest GNRs) from general reasons. The experiments were performed on datasets from the UCI Machine Learning Repository \cite{Dua:2019} and the Penn Machine Learning Benchmarks \cite{Olson2017PMLB}. The random forest classifiers were learned by WEKA \cite{weka} using python-weka-wrapper3 available at pypi.org. We used 85\% of the dataset instances as training data and 15\% as testing data. We used a Python implementation of our algorithms on an Intel Xeon E5-2670 2.60GHz CPU with 256GB RAM. 

Table~\ref{table:experiments-compilation} summarizes the performance of Algorithm~\ref{alg:RF-classformula} in comparison with the experimental results reported in~\cite{DBLP:conf/icaart/MurtoviSS25}. The work in~\cite{DBLP:conf/icaart/MurtoviSS25} compiled a random forest classifier into an equivalent decision graph and then compared their approach with the ADD (Algebraic Decision Diagrams) compilation algorithm  in~\cite{DBLP:journals/sttt/GossenS23}. According to~\cite{DBLP:conf/icaart/MurtoviSS25}, their approach ORD has a geomean speedup of 13.2 in terms of compilation time and a geomean size increase of 3.03 across all datasets in Table~\ref{table:experiments-compilation}, compared with the ADD compilation algorithm in~\cite{DBLP:journals/sttt/GossenS23}. The class formulas of random forest classifiers can be obtained efficiently from the compiled ORD decision graph as NNF circuits~\cite{DBLP:conf/aaai/DarwicheJ22}, but the compiled ORD decision graph is not necessarily weak test-once and, thus, the complete and general reasons cannot be computed from these class formulas in linear time. As shown in Table~\ref{table:experiments-compilation}, we run Algorithm~\ref{alg:RF-classformula} with the same number of trees and depth reported in~\cite{DBLP:conf/icaart/MurtoviSS25} so that we can compare the sizes of the circuits generated. Among all tested datasets, the accuracy ranges from $80\%$ to $100\%$ with the only exception being the ``appendicitis'' dataset. The sizes of the NNF circuits generated are larger than the sizes of the ADDs for four simple datasets, among which three out of four are binary. Our NNFs are much smaller than the ADDs for all other datasets, and thus are also smaller than the sizes of the ORD decision graphs. Algorithm~\ref{alg:RF-classformula} is also efficient time-wise: it takes at most 7.11 seconds to compile the NNFs for any of the considered classifiers. The data in column \textbf{$\text{T}_{\text{ADD}}$} is taken from~\cite{DBLP:conf/icaart/MurtoviSS25} so we cannot run a direct time comparison, but the difference in compilation time is significant enough to conclude that Algorithm~\ref{alg:RF-classformula} is much more efficient than the ADD compilation approach~\cite{DBLP:journals/sttt/GossenS23} and the ORD compilation approach~\cite{DBLP:conf/icaart/MurtoviSS25}. We also tested the performance of Algorithm~\ref{alg:RF-classformula} on all the datasets in Table~\ref{table:experiments-compilation} with $1000$ trees and same depth. The random forest classifier for the ``segment'' dataset generated the largest NNF circuits of 3,837,629 nodes in total with the longest compilation time of 323.9 seconds. Algorithm~\ref{alg:RF-classformula} also has a similar but slightly better performance on the classifier for the ``shuttle'' dataset. This is due to segment and shuttle having the largest number of classes among the datasets we tested. These results show that Algorithm~\ref{alg:RF-classformula} scales very well relative to the number of trees in the random forest.

Table~\ref{table:experiments-nt-t1} and~\ref{table:experiments-nt-t2} summarize the performance of Algorithm~\ref{alg:RF-classformula}B for compiling class formulas as conjunctions of weak test-once decision graphs (DGs) on the same datasets with the same number of features, classes, test instances, and max depth. For each dataset, we explain up to $100$ test instances. We tested and discovered that the number of instances that belong to multiple classes was very low (less than 1\%). As such, for simplicity, we ordered the classes and treated the instances in multiple classes as if they belong to the highest ranked class that receives most votes. For each test instance, we generate the complete reason and the general reason using the closed-forms in~\cite{DBLP:conf/aaai/DarwicheJ22,DBLP:conf/jelia/JiD23}, then generate all the SRs and NRs from the complete reason using the algorithms in~\cite{DBLP:conf/aaai/DarwicheJ22} together with the robustness and shortest GNRs using Algorithms~\ref{alg:Robustness-vars} and~\ref{alg:sGNRs}. Unsurprisingly, Algorithm~\ref{alg:RF-classformula}B is less efficient and generates larger circuits compared to Algorithm~\ref{alg:RF-classformula} as it generates more tractable circuits; see~\cite{DBLP:journals/jair/DarwicheM02}. Nevertheless, Algorithm~\ref{alg:RF-classformula}B has reasonable performance: it compiles classifiers with at least 50 trees for 12/19 datasets we tested. The enumeration of SRs and NRs can also be done efficiently for most datasets, with two exceptions being ``ionosphere'' and ``ring'' --- ionosphere has the most features among all the datasets and consumed much more time to generate many more SRs and NRs. The enumeration of NRs is more efficient than the enumeration of SRs on these datasets. Moreover, a large compilation time of Algorithm~\ref{alg:RF-classformula}B does not necessarily imply a large SRs/NRs enumeration time, as seen by comparing datasets ann-thyroid, appendicitis, glass2 and magic with ionosphere and ring, but the size of the complete reason and the enumeration time appear correlated. We also found that the average robustness is very low for all the datasets we tested: the decisions can be flipped by modifying fewer than two features on average! We also computed the set of all shortest GNRs using general reasons and found that for 17/19 datasets, the number
of shortest GNRs per decision is less than $10$ so the set of all shortest decision flips can be captured using less than $10$ GSRs. Algorithm~\ref{alg:sGNRs} is also very efficient, as it has a similar overall performance to the computation of NRs.

Table~\ref{table:experiments-CE} summarizes our results for computing
contrastive explanations (CEs) on the multi-class datasets with the same settings in Table~\ref{table:experiments-nt-t1} and~\ref{table:experiments-nt-t2} --- for binary classifiers, CEs coincide with necessary reasons (NRs)
which we reported on in Table~\ref{table:experiments-nt-t1} and~\ref{table:experiments-nt-t2}.
Table~\ref{table:experiments-CE} shows the performance of Algorithm~\ref{alg:RF-classformula}A for compiling class formulas as weak test-once decision graphs, which is needed for computing CEs.
We compute the CEs for every test instance and possible target class using the following approach: for target class $c_i$, we negate the class formulas $R^i$  computed by Algorithm~\ref{alg:RF-classformula}A, then compute the complete reason from the negated class formula $\neg R^i$ (which is also a weak test-once decision graph) using the closed-form introduced in~\cite{DBLP:conf/aaai/DarwicheJ22}, and finally generate CEs as their prime implicates using also the algorithms in~\cite{DBLP:conf/aaai/DarwicheJ22}.\footnote{This approach is identical to the approach in \cite{DBLP:conf/aaai/DarwicheJ22,DBLP:conf/lics/Darwiche23} except for one point: for target class $c_i$, the approach in~\cite{DBLP:conf/aaai/DarwicheJ22,DBLP:conf/lics/Darwiche23} would merge all the remaining classes and compute a superclass $c_{\not = i}$ with class formula $R^{\not = i} = \bigvee_{j \neq i} R^j$. The basic idea is that kicking instance $\instance$ out of superclass $c_{\neq i}$ ensures it will end up in the target class $c_i$. Formulas $\neg R^i$ and $\bigvee_{j \neq i} R^j$ are equivalent under the assumption that all class formulas are mutually exclusive as in~\cite{DBLP:conf/aaai/DarwicheJ22,DBLP:conf/lics/Darwiche23}. However, this does not hold for random forest classifiers. If an instance is kicked out of  $\bigvee_{j \neq i} R^j$, it will land in a set where $c_i$ is the only new class of the modified instance. That is, it will miss changes that lead to instances which belong to class $c_i$ and some other tied class. As such, instead of merging all other class formulas, we choose to negate the class formula $R^i$, as this identifies all changes that lead to class $c_i$, including situations in which $c_i$ is not the unique class that the modified instance belongs to.}
As expected, Algorithm~\ref{alg:RF-classformula}A is relatively more expensive compared to Algorithm~\ref{alg:RF-classformula}B for classifiers with a fewer number of trees, as a random forest with fewer trees will generate sorting networks with fewer layers. Overall, the additional compilation cost of Algorithm~\ref{alg:RF-classformula}A is acceptable, and Algorithm~\ref{alg:RF-classformula}A generates smaller class formulas on all the datasets we tested. 
Finally, the enumeration of CEs was done efficiently for all the datasets we tested. 

\section{Conclusion}\label{sec:conclusion}

We proposed two algorithms for compiling the class formulas of a random forest classifier. The first algorithm generates class formulas as NNF circuits, and the experimental results show that this algorithm is significantly more efficient compared to existing approaches for obtaining class formulas as circuits. The second algorithm converts the generated NNF circuits into tractable decision graphs, which allow us to compute the complete and general reason for a decision in time linear in the graph sizes. Our approach forms a basis for explaining decisions made by random forest classifiers by allowing us to enumerate various explanations 
for these decisions (sufficient, necessary, and contrastive) and to compute decision robustness together with all shortest ways to flip the decision. 
Our empirical evaluation showed that all these computations can be done quite efficiently on the datasets we tested.

\nocite{lipton_1990,DBLP:journals/ai/Miller19,DBLP:conf/aiia/IgnatievNA020,DBLP:conf/aaai/DarwicheJ22,DBLP:conf/lics/Darwiche23,DBLP:conf/icaart/MurtoviSS25,DBLP:journals/sttt/GossenS23,DBLP:journals/jair/DarwicheM02,Dua:2019,Olson2017PMLB,weka}

\vspace{3mm}
\noindent{\bf Ack.} This work has been partially supported by ARL grant \#W911NF2510095.

\bibliographystyle{splncs04}
\bibliography{references}

\appendix 

\section{Proofs}\label{sec:supplementary-proofs}

We first prove the correctness of Algorithm~\ref{alg:Apply}.

\begin{proof}[Proof of the correctness of Algorithm~\ref{alg:Apply}]
Let $D_1$ and $D_2$ be two weak test-once decision graphs whose leaves are either true or false. Let $D_3$ be the output of $\Call{Apply}{D_1, D_2, p, op}$, where $p$ contains all states of every variable in $D_1$ and $D_2$, and op is either $\wedge$ or $\vee$. Our goal is to prove that $D_3$ is a weak test-once decision graph whose leaves are either true or false, and it is equivalent to $D_1\; op \; D_2$.

We first prove that $D_3$ is a weak test-once decision graph whose leaves are either true or false. It should be immediate that the leaves of $D_3$ are either true or false, since each leaf of $D_3$ is obtained by conjoining or disjoining some leaf of $D_1$ or $D_2$ with true or false (line~\ref{ln:Apply-proof} of Algorithm~\ref{alg:Apply}). Moreover, it should also be immediate that $D_3$ satisfy the weak test-once property, since during each recursive call of $\Call{Apply}{D_1, D_2, p, op}$, $p$ keeps tracks of all the legal states that can appear without violating the weak test-once property.

We next prove that $D_3$ is equivalent to $D_1\; op \; D_2$. Consider an arbitrary instance $\instance$. $D_1$ classifies $\instance$ by going through a path $p_1$ in $D_1$ that leads to leaf $l_1$, which is either true or false. $D_2$ and $D_3$ classifies $\instance$ similarly by going through paths $p_2$ and $p_3$ that leads to $l_2$ and $l_3$. Our goal is to prove that $l_3 = l_1\; op \;l_2$, and we do so by the following recursive procedure.

\begin{enumerate}
    \item If $D_1$ is $l_1$ (true/false) or $D_2$ is $l_2$, then by line~\ref{ln:Apply-proof} of Algorithm~\ref{alg:Apply}, $D_3$ is exactly $D_1\; op \; D_2$ with states inconsistent with $p$ removed. Since $p$ is always consistent with $\instance$, we have $l_3 = l_1\; op \;l_2$.
    \item Otherwise, since $\instance$ must be consistent with path $p$, $p_1$, and $p_2$, by either line~\ref{ln:Apply-proof-2} or line~\ref{ln:Apply-proof-3}, $D_3$ will contain an edge consistent with $\instance$ pointing to a child $c = \Call{Apply}{c_1, c_2, p', op}$ or $\Call{Apply}{c_1, D_2, p', op}$, where $c_1$ is the next node in path $p_1$, $c_2$ is the next node in path $p_2$, and $c$ is the next node in path $p_3$. Moreover, $p'$ must also be consistent with $\instance$. Thus, we can let $D_1 = c_1$, $D_2 = c_2$ or $D_2$ (depending on whether $c$ was generated by line~\ref{ln:Apply-proof-2} or line~\ref{ln:Apply-proof-3}), $D_3 = c$, and $p = p'$. Although we modified $D_1, D_2$, and $D_3$, we simply moved forward by one step on paths $p_1, p_2, p_3$, so $l_1, l_2$, and $l_3$ do not change. If we can show in this new setting, $l_3 = l_1\; op \;l_2$, then we finish the proof. We next go back to the previous step with the modified setting.
\end{enumerate}

Since the paths $p_1$ and $p_2$ must be finite, the procedure above will always terminate with $l_3 = l_1\; op \;l_2$ guaranteed by line~\ref{ln:Apply-proof} of Algorithm~\ref{alg:Apply}. 
\end{proof}

We next prove Proposition~\ref{prop:sGNR} then Proposition~\ref{prop:sNR}.

\begin{proof}[Proof of Proposition~\ref{prop:sGNR}]
Let $\instance$ be an instance and $\instance'$ be a shortest flip to the decision on $\instance$.
Let $\tau$ be a term consisted of literals in $\instance'$ such that $\SetV(\tau) = \Dvars(\instance, \instance')$. Then $\neg \tau$ is a clause that satisfies the requirements in Definition~8 in~\cite{DBLP:conf/jelia/JiD23}. Therefore, there must be a GNR $\sigma$ satisfies $\sigma \models \neg \tau$. Since $\instance'$ is a shortest flip, we have $\SetV(\neg \tau) = \SetV(\sigma)$. Thus, $\instance'$ is a $\sigma$-violation.
\end{proof}

\begin{proof}[Proof of Proposition~\ref{prop:sNR}]
Let $\instance$ be an instance and $\instance'$ be a shortest flip to the decision on $\instance$.
By Proposition~\ref{prop:sGNR}, there must be a GNR $\sigma'$ such that $\instance'$ is a $\sigma'$-violation. Let $\sigma$ be a clause obtained by disjoining all states that appear both in $\instance$ and $\sigma'$. Then by Proposition~11 in~\cite{DBLP:conf/jelia/JiD23}, $\sigma$ must be an NR. Furthermore, we must have $\SetV(\sigma) =\SetV(\sigma')$, since $\sigma'$ is a shortest GNR and $\sigma$ cannot identify an even shortest flip. Hence, $\instance'$ must be a $\sigma$-violation.
\end{proof}

We next prove the correctness of Algorithm~\ref{alg:sGNRs}. We first show that the variables set $V$ is universal. That is, the set of variables in NRs is the same as the set of variables in GNRs.

\begin{lemma}\label{lemma:universal-v}
$\{\SetV(\sigma) | \sigma \text{ is a shortest NR}\} = \{\SetV(\sigma) | \sigma \text{ is a shortest GNR}\}$.
\end{lemma}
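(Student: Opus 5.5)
The plan is to prove both inclusions by combining Lemmas~\ref{lemma:sNR} and~\ref{lemma:sGNR} with Propositions~\ref{prop:sNR} and~\ref{prop:sGNR}. The first step is to show that shortest NRs and shortest GNRs have the same length. Both lengths should equal the robustness $r$ of the decision, that is, the number of variables on which a shortest flip disagrees with $\instance$.

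For NRs, I would use Lemma~\ref{lemma:sNR}: every NR $\sigma$ has a $\sigma$-violation that is a flip, so $\lvert\SetV(\sigma)\rvert \ge r$. Proposition~\ref{prop:sNR} then gives an NR whose variables are exactly $\Dvars(\instance,\instance')$ for a shortest flip $\instance'$, so its length is $r$. The GNR case follows the same pattern, using Lemma~\ref{lemma:sGNR}. That lemma needs a nonempty set of $\sigma$-violations, but this is immediate: a GNR is implied by $\instance$ and contains no full literal, so every variable of $\sigma$ can be changed to a state that falsifies its literal. Together with Proposition~\ref{prop:sGNR}, this shows shortest GNRs also have length $r$.

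For the inclusion from left to right, take a shortest NR $\sigma$. By Lemma~\ref{lemma:sNR} there is a $\sigma$-violation $\instance'$ that flips the decision. It differs from $\instance$ exactly on $\SetV(\sigma)$, which has size $r$, so $\instance'$ is a shortest flip. Proposition~\ref{prop:sGNR} gives a GNR $\sigma'$ with $\SetV(\sigma')=\Dvars(\instance,\instance')=\SetV(\sigma)$. This $\sigma'$ has length $r$ and is therefore a shortest GNR.

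The reverse inclusion is symmetric. Take a shortest GNR $\sigma'$ and choose any $\sigma'$-violation $\instance'$. By Lemma~\ref{lemma:sGNR} it is a flip of size $r$, hence a shortest flip. Proposition~\ref{prop:sNR} then gives an NR $\sigma$ with $\SetV(\sigma)=\SetV(\sigma')$, and $\sigma$ is shortest.

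The main obstacle is a careful reading of the definitions: a $\sigma$-violation must differ from $\instance$ on exactly $\SetV(\sigma)$, and such violations must exist. Existence of a violation for GNRs depends on each literal being a proper subset of the states, which holds by the definition of a literal. One more point is that a flip means $\instance'\not\models\Delta$, where $\Delta$ is the merged class formula assumed in this section.
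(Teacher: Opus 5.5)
Your proof is correct and follows the paper's argument. A shortest NR yields a flip via Lemma~\ref{lemma:sNR}; that flip is a shortest flip, so Proposition~\ref{prop:sGNR} gives a GNR with the same variables, and the reverse direction is symmetric. Your equal-length step (both minima equal the robustness) spells out what the paper takes from its main-text remark that an NR or GNR identifies a shortest flip iff it is shortest.

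One small correction concerns why every GNR $\sigma'$ has a $\sigma'$-violation. This does not follow from $\instance\models\sigma'$ alone, since a literal of $\sigma'$ that excludes $\instance$'s state could force $\instance'$ to agree with $\instance$ on that variable. What you need is that every literal of $\sigma'$ contains $\instance$'s state. This holds because GNRs are prime implicates of the general reason, which is locally fixated on $\instance$.
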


\begin{proof}[Proof of Lemma~\ref{lemma:universal-v}]
Let $S_{NR}$ denote $\{\SetV(\sigma) | \sigma \text{ is a shortest NR}\}$ and $S_{GNR}$ denote $\{\SetV(\sigma) | \sigma \text{ is a shortest GNR}\}$.
Let $\sigma$ be an arbitrary shortest NR. Then there is a $\sigma$-violation $\instance'$ that is a shortest flip to the decision. This shortest flip must also be identified by a shortest GNR, so we have $S_{NR} \subseteq S_{GNR}$. The reverse direction works similarly, which gives us $S_{NR} \supseteq S_{GNR}$. This concludes the proof.
\end{proof}

We next show why the set of all shortest GNRs can be computed by converting the general reason into a CNF with only the shortest clauses and then closing it under resolution. We first go over the definition of resolution in discrete logic and the fixation property introduced in~\cite{DBLP:conf/jelia/JiD23}.

\begin{definition}\label{def:resolution}
Let $\alpha = l_1 \OR \sigma_1$, 
$\beta = l_2 \OR \sigma_2$ be two clauses
where $l_1$ and $l_2$ are $X$-literals (literals of variable $X$).
If \(\sigma = (l_1 \AND l_2) \OR \sigma_1 \OR \sigma_2 \neq \) true, then
the \(X\)-resolvent of clauses $\alpha$ and $\beta$ is defined as the clause equivalent to \(\sigma\).
\end{definition}

\begin{definition}\label{def:fixation}
An NNF is locally fixated on instance $\instance$  
iff every literal in the NNF is consistent with~$\instance$.
\end{definition}

\begin{corollary}\label{cor:resolution-subsumption}
Let $\alpha$, $\alpha'$, and $\beta$ be three clauses where $\alpha$ subsumes $\alpha'$. Then the resolvent of $\alpha$ and $\beta$ on variable $X$ subsumes the resolvent of $\alpha'$ and $\beta$ on variable $X$.
\end{corollary}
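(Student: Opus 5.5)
The plan is to reduce subsumption between discrete clauses to a per-variable containment of state sets, and then observe that forming the $X$-resolvent is monotone in each of its inputs with respect to that containment.

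\emph{Normalizing the clauses.} First normalize every clause so that it contains at most one literal per variable, by merging $V$-literals into their union. A variable absent from a clause is treated as carrying the empty set of states. Under this convention a non-valid clause $\gamma$ is determined by the map $V \mapsto S_\gamma(V)$, where $S_\gamma(V)$ is the set of states of $V$ occurring in $\gamma$. A world falsifies $\gamma$ iff, for every variable $V$, its state lies outside $S_\gamma(V)$.

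\emph{The characterization lemma.} The key auxiliary fact is that, for non-valid clauses $\gamma,\gamma'$, we have $\gamma \models \gamma'$ iff $S_\gamma(V) \subseteq S_{\gamma'}(V)$ for every $V$. The direction from containment to implication is immediate. For the converse, suppose some state $v \in S_\gamma(V)\setminus S_{\gamma'}(V)$. Build a world that sets $V=v$ and sets every other variable $W$ to a state outside $S_{\gamma'}(W)$. Such a state exists because $\gamma'$ is not valid, and literals are proper subsets of the states. This world satisfies $\gamma$ but falsifies $\gamma'$, a contradiction.

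\emph{Monotonicity of the resolvent.} Write $\alpha = l_1 \vee \sigma_1$, $\alpha' = l_1' \vee \sigma_1'$ and $\beta = l_2 \vee \sigma_2$, with $l_1,l_1',l_2$ the (possibly empty) $X$-literals. The $X$-resolvent of $\alpha$ and $\beta$ has state sets
\[
S(X) = (l_1\cap l_2)\cup S_{\sigma_1}(X)\cup S_{\sigma_2}(X), \qquad S(V) = S_{\sigma_1}(V)\cup S_{\sigma_2}(V) \text{ for } V \neq X .
\]
After normalization $S_{\sigma_1}(X)$ and $S_{\sigma_1'}(X)$ are empty, but keeping them is harmless. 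The resolvent of $\alpha'$ and $\beta$ has the same form with primes. The case of a valid $\alpha'$ is trivial, since every clause entails a valid clause. Otherwise the lemma gives $l_1\subseteq l_1'$ and $S_{\sigma_1}(V)\subseteq S_{\sigma_1'}(V)$ for all $V$. Intersection with $l_2$ and union with the $\sigma_2$ sets are monotone, so every state set of the first resolvent is contained in the corresponding one of the second. The containment-to-implication direction of the lemma then yields the subsumption claimed in Corollary~\ref{cor:resolution-subsumption}.

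\emph{Boundary cases.} Definition~\ref{def:resolution} only defines a resolvent when the resulting clause is not true, so two cases need care. If the resolvent of $\alpha'$ and $\beta$ is true, then any clause subsumes it and the claim is vacuous. If the resolvent of $\alpha$ and $\beta$ is true, the containment shows that the resolvent of $\alpha'$ and $\beta$ is also true, so neither resolvent is defined and there is nothing to prove.

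I expect the only real obstacle to be the careful statement of the characterization lemma: handling absent variables via the empty set, and the non-validity hypothesis needed to build the falsifying world. The rest is bookkeeping.
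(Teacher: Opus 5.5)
Your main argument is correct. It normalizes each clause to one literal per variable, characterizes $\gamma \models \gamma'$ for non-valid clauses as per-variable containment of state sets, and observes that the $X$-resolvent is monotone under that containment. This is a careful unpacking of the paper's one-line justification that the corollary ``follows from Definition~\ref{def:resolution}''; the paper gives no further argument. Normalization is essential, not cosmetic. Without it the resolvent depends on how a clause is split as $l_1 \vee \sigma_1$, and your lemma only gives containment of the merged $X$-sets. So keeping $S_{\sigma_1}(X)$ is harmless only because it is empty.

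The step that fails is your dismissal of a valid $\alpha'$. The fact that ``every clause entails a valid clause'' is about $\alpha'$ itself. What must be entailed is the \emph{resolvent} of $\alpha'$ and $\beta$, and resolution on $X$ intersects the $X$-states of $\alpha'$ with $l_2$, so a valid $\alpha'$ can have a non-valid resolvent. In fact the statement is false in that case. Take ternary $X$ and $Y$, with $\alpha = x_2 \vee y_2$, $\alpha' = x_{12} \vee x_3$ (valid, so $\alpha \models \alpha'$) and $\beta = x_1 \vee y_1$. By your formula the two $X$-resolvents are $y_{12}$ and $x_1 \vee y_1$. The world with $X = x_2$ and $Y = y_2$ satisfies the first but not the second.

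So this case has to be excluded rather than proved, and the intended reading of Definition~\ref{def:resolution} does exclude it. Suppose $l_1'$ is the only $X$-literal of $\alpha'$ and is a proper subset of the states. Then $\alpha'$ can be valid only because $S_{\alpha'}(V)$ is full for some $V \neq X$. In that case the resolvent of $\alpha'$ and $\beta$ also has a full $V$-set, so it is true and therefore undefined, and the claim is vacuous. With that replacement, or simply the assumption that $\alpha'$ is non-valid, your proof is complete.
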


The corollary above follows from Definition~\ref{def:resolution}. It essentially says that if a clause is subsumed by another clause, then the subsumed clause becomes irrelevant in the sense that the resolvents it generates must also be subsumed by other resolvents. Hence, if we are closing a set of clauses under resolution, the order of subsumption checks doesn't matter. We can do incremental subsumption check in the process, or we can do one final subsumption check after the resolutions are over, and we will always get the same results. We will use this result implicitly in the following proofs. We then introduce the following lemma from~\cite{DBLP:conf/jelia/JiD23}, which is crucial for Algorithm~\ref{alg:sGNRs}.

\begin{lemma}\label{lemma:inc vd clauses}
Let $\alpha=l_1 \OR \sigma_1$, 
$\beta=l_2 \OR \sigma_2$ be two clauses which are locally fixated on some instance $\instance$. 
If $l_1$ and $l_2$ are \(X\)-literals, and if
\(\sigma\) is the \(X\)-resolvent
of clauses \(\alpha\) and \(\beta\), then
$\SetV(\sigma) = \SetV(\alpha) \cup \SetV(\beta)$.
\end{lemma}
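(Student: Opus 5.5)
We prove the statement directly from Definition~\ref{def:resolution} by checking where the literals of the resolvent sit relative to the instance $\instance$. Since $\alpha$ and $\beta$ are locally fixated on $\instance$ (Definition~\ref{def:fixation}), every literal in them is consistent with $\instance$. In particular $l_1$ and $l_2$ both contain the state $x$ that $\instance$ assigns to $X$. Hence $l_1 \AND l_2$ is the $X$-literal $l_1 \cap l_2$, which contains $x$, so it is a nonempty literal. It is not trivially true either, because it is contained in $l_1 \subset \{x_1,\ldots,x_n\}$. So $l_1 \AND l_2$ is a genuine $X$-literal.

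Next we simplify $\sigma = (l_1 \AND l_2) \OR \sigma_1 \OR \sigma_2$ into the clause equivalent to it, merging literals of the same variable by union. We treat $X$ and the variables $Y \ne X$ separately.

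\begin{itemize}
\item \textbf{Variables $Y \ne X$.} A $Y$-literal of $\sigma$ is the union of the $Y$-literals in $\sigma_1$ and $\sigma_2$. Each of these contains the state $y$ assigned by $\instance$. The union could collapse to the full state set, which would make $\sigma$ true, but Definition~\ref{def:resolution} assumes $\sigma \neq$ true. So the union stays a proper nonempty $Y$-literal, and $Y$ remains in $\SetV(\sigma)$.
\item \textbf{Variable $X$.} If $\sigma_1$ or $\sigma_2$ also contains $X$-literals, these are unioned with $l_1 \cap l_2$. The result still contains $x$, so it is nonempty, and it is not full because $\sigma \ne$ true. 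So $X$ remains in $\SetV(\sigma)$.
\end{itemize}

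The key point in both cases is that no literal can be dropped as empty, since every literal of $\alpha$ and $\beta$ contains the instance's state for its variable. The only way a variable could disappear is through the whole clause becoming valid, and that is excluded by hypothesis.

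It follows that $\SetV(\sigma) = \{X\} \cup \SetV(\sigma_1) \cup \SetV(\sigma_2) = \SetV(\alpha) \cup \SetV(\beta)$.

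The main step to handle carefully is the $X$-literal. Without fixation, $l_1 \cap l_2$ could be empty and $X$ would vanish, as in ordinary Boolean resolution. With fixation, the common state $x$ guarantees the intersection survives.
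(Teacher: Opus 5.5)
Your proof is correct: the paper does not prove this lemma itself but imports it from \cite{DBLP:conf/jelia/JiD23}, and your direct check from Definition~\ref{def:resolution} is the natural argument. Local fixation forces $x \in l_1 \cap l_2$, and the hypothesis $\sigma \neq$ true keeps every merged literal a proper subset, so no variable can vanish. One small refinement: fixation is only needed for the $X$-literal $l_1 \cap l_2$, because the merged $Y$-literals (and any extra $X$-literals coming from $\sigma_1,\sigma_2$) are already nonempty, since literals are nonempty by definition.
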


This lemma allows us to prove that only the shortest clauses in the general reason CNF is needed to compute the set of all shortest GNRs.

\begin{proposition}\label{prop:alg-sGNR-correctness}
Let NNF $\Delta$ be a general reason for the decision on $\instance$ and set $V =  \{\SetV(\sigma) | \sigma \text{ is a shortest GNR}\}$. Let $S$ denote the set of clauses obtained by converting $\Delta$ into a CNF. For each $v_i \in V$, let $S_i$ denote the set $\{c | c \in S, \SetV(c) = v_i\}$, and let $S_i^*$ denote the set obtained by closing $S_i$ under resolution while removing subsumed clauses. Then the set of all shortest GNRs for the decision  is equal to $\bigcup_{v_i \in V} S_i^*$.
\end{proposition}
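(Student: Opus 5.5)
The proof compares two procedures. The first converts $\Delta$ into a CNF $S$ and closes it under resolution with subsumption removal; call the resulting set $S^*$. The second restricts to the clauses whose variable sets lie in $V$ and closes each group $S_i$ separately.

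From~\cite{DBLP:conf/jelia/JiD23} I take two facts. The GNRs are the variable-minimal prime implicates of the general reason. Every prime implicate of the locally fixated general reason is obtained by closing any CNF of $\Delta$ under resolution and removing subsumed clauses; the CNF is locally fixated because every literal of the general reason is implied by $\instance$. Hence the shortest GNRs are exactly the clauses of $S^*$ (prime implicates, filtered for variable-minimality) whose variable set is some $v_i \in V$. The task is therefore to show that the clauses of $S^*$ with variable set $v_i$ are exactly $S_i^*$.

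\textbf{Step 1 (variable sets under resolution).} By Lemma~\ref{lemma:inc vd clauses}, every resolvent of two clauses derived from $S$ has variable set $\SetV(\alpha)\cup\SetV(\beta)$. Local fixation is preserved by resolution: the literal $l_1\wedge l_2$ is consistent with $\instance$ when both $l_1$ and $l_2$ are. By induction on the derivation, any clause $c$ obtained from $S$ by resolution satisfies $\SetV(c)=\bigcup_{d\in L}\SetV(d)$, where $L$ is the set of leaves of its derivation tree. So if $\SetV(c)=v_i$, every leaf has variables contained in $v_i$.

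\textbf{Step 2 (leaves must have exactly $v_i$).} Take a shortest GNR $\sigma$ with $\SetV(\sigma)=v_i$, and suppose some leaf $d$ of its derivation has $\SetV(d)\subsetneq v_i$. Then $d$ is an implicate of $\Delta$. It is subsumed by some prime implicate, and by variable-minimal selection it yields a GNR on a strict subset of the variables of $v_i$, contradicting shortestness. The step from "$d$ is an implicate with few variables" to "there is a GNR on at most $\SetV(d)$" should follow from the definitions of GNR and variable-minimality in~\cite{DBLP:conf/jelia/JiD23}; this is the main obstacle, and I would verify it carefully. Granting it, every leaf lies in $S_i$ and every intermediate clause has variable set $v_i$, so $\sigma$ is derivable by resolution inside $S_i$ alone. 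Hence $\sigma$ is subsumed by a clause of $S_i^*$. Conversely, every clause of $S_i^*$ is an implicate of $\Delta$ with variable set $v_i$, so it is subsumed by some prime implicate. That prime implicate cannot use fewer variables, by the shortestness argument just given, so it is itself a shortest GNR.

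\textbf{Step 3 (subsumption bookkeeping).} Using Corollary~\ref{cor:resolution-subsumption}, removing subsumed clauses at intermediate stages does not change the final result. Two subsumption facts remain: each $\sigma$ is subsumed by some $c\in S_i^*$, and each $c\in S_i^*$ is subsumed by some shortest GNR $\sigma'$. A clause of $S_i^*$ that is subsumed by a shortest GNR has the same variables; primality of $\sigma'$ and the fact that $S_i^*$ contains no subsumed clauses then force equality. This yields set equality for each $i$. Taking the union over $v_i\in V$ completes the proof, and Lemma~\ref{lemma:universal-v} justifies computing $V$ from NRs via Algorithm~\ref{alg:Robustness-vars}.
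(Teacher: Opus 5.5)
Your proposal is correct and follows essentially the same route as the paper. Both arguments take from~\cite{DBLP:conf/jelia/JiD23} that the shortest GNRs are the shortest clauses of the resolution closure of $S$. Both then use Lemma~\ref{lemma:inc vd clauses}, together with preservation of local fixation, to show that a clause with variable set $v_i$ can only be derived from clauses of $S_i$, and both finish with the same subsumption bookkeeping. The one point where you are more explicit is the claim that no clause of $S$ has fewer variables than a shortest GNR: you justify it through prime implicates and variable-minimality, while the paper only asserts that the minimal length of clauses in $S$ equals $l$.
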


\begin{proof}[Proof of Proposition~\ref{prop:alg-sGNR-correctness}]
This proof is based on the following key observation: by Lemma~\ref{lemma:inc vd clauses}, the length of resolvents generated from clauses locally fixated on $\instance$ cannot decrease. Thus, if we close a set of clauses under resolution while removing subsumed clauses, the shortest clauses in the end must can be generated using only the shortest clauses in the beginning. We next begin our proof.

Let $S^*$ be the set of clauses obtained by closing $S$ under resolution while removing subsumed clauses. Note that $\Delta$ is locally fixated by Corollary~1 and Proposition~13 in~\cite{DBLP:conf/jelia/JiD23}, so every clause in $S$ must be locally fixated on $\instance$ because  converting an NNF into an CNF preserves the local fixation property (we are only considering the standard CNF conversion algorithm here). Furthermore, by Definition~\ref{def:resolution}, if two clauses are both locally fixated on $\instance$, then the resolvent must also be locally fixated on $\instance$. Thus, by induction, every resolvent generated during the resolution process is locally fixated on $\instance$. 

By Propositions~18 and~19 in~\cite{DBLP:conf/jelia/JiD23}, the shortest clauses in $S^*$ are exactly the shortest GNRs of the decision. Let $l$ be the minimal length of clauses in $S^*$. That is, $l = \lvert v_i \rvert$ for every $v_i \in V$. Note that the minimal length of clauses in $S$ must also be $l$. Now, consider the current set of clauses, $S'$, at some instant of the resolution process of closing $S$ under resolution. By Lemma~\ref{lemma:inc vd clauses}, any-resolvent $c$ generated from two clauses $c_1, c_2$ in $S'$ has $\lvert \SetV(c) \rvert = l$ if and only if $\SetV(c_1) = \SetV(c_2)$ and $\lvert \SetV(c_1) \rvert = l$.

Therefore, for every shortest clause $c$ in $S^*$ with $\SetV(c) = v_i$, $c$ must either be in $S_i$, or $c$ is a resolvent generated using clauses $c_1'$ and $c_2$ satisfying $\SetV(c_1') = \SetV(c_2') = v_i$. This statement is also true for $c_1'$ (and $c_2'$), since $c_1'$ must either be in $S_i$, or $c_1'$ is a resolvent generated using clauses $c_1''$ and $c_2''$ satisfying $\SetV(c_1'') = \SetV(c_2'') = v_i$. Since every shortest clause $c$ in $S^*$ can be obtained from $S$ by a finite number of resolutions, by induction, $c$ can be obtained from $S_i$ by closing it under resolution. Finally, since every clause in $S_i^*$ can also be generated by closing $S$ under resolution, $c$ not being subsumed by any other clause in $S^*$ indicates that $c$ must not be subsumed by any distinct clause in $S_i^*$. Hence, we have $c \in S_i^*$. This concludes one direction of the proof as it shows that every shortest GNR is in some $S_i^*$. We next prove the other direction.

Let $c$ be a clause in $S_i^*$ for some $v_i \in V$. We want to show that $c$ must be a shortest GNR. Since $c$ already satisfies $\SetV(c) = v_i$ and $c$ can be obtained by closing $S_i$, we only need to prove that $c$ cannot be subsumed by any distinct clause in $S^*$. This step is immediate since a clause $c_1$ can only be subsumed by a clause $c_2$ if $\SetV(c_1) \supseteq \SetV(c_2)$. Hence, if $c$ is not subsumed by any other clause in $S_i^*$, it must not be subsumed by any distinct clause in $S^*$.
\end{proof}

Proposition~\ref{prop:alg-sGNR-correctness} indicates that the main idea of Algorithm~\ref{alg:sGNRs} is correct: only the shortest clauses of a general reason CNF are needed to generate the shortest GNRs. We next show that variable depletion can be extended to discrete logic.

\begin{proposition}\label{prop:vd-discrete}
Let $S$ be a set of clauses, $O$ be a variable order of all variables appearing in $S$, and $T^*$ be a set of clauses obtained via the following procedures that we call variable depletion:
\begin{itemize}
    \item For each variable $o_i$ in $O$:
    \begin{itemize}
        \item For every pair of clauses $c_1, c_2$ in $S$:
        \begin{itemize}
            \item Resolve $c_1$ and $c_2$ on $o_i$, if possible
            \item Add the resolvent to $S$
            \item Remove subsumed clauses in $S$
        \end{itemize}
    \end{itemize}
\end{itemize}
Then $T^*$ is equal to the set of clauses obtained by closing $S$ under resolution while removing subsumed clauses.
\end{proposition}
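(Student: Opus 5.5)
The plan is to compare the two procedures through their final, subsumption-free clause sets. Let $S^*$ denote the closure of $S$ under resolution with subsumed clauses removed. We show two inclusions: every clause of $T^*$ is implied by (indeed subsumed by) something in $S^*$, and every clause of $S^*$ is subsumed by some clause of $T^*$. Since both sets are subsumption-free, mutual subsumption then forces $T^*=S^*$.

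\textbf{Soundness, $T^*\subseteq S^*$ modulo subsumption.} Every clause added during variable depletion is a resolvent of clauses already derived from $S$. It therefore lies in the resolution closure of $S$, and so it is subsumed by some clause of $S^*$. Because $T^*$ is subsumption-free, the converse inclusion below will upgrade this to membership.

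\textbf{Completeness.} This is the main obstacle. We must show that resolving variable by variable, once each, loses no resolvent that the unrestricted closure would produce. In the propositional case this is the classical theorem that variable depletion (also called Davis–Putnam style consensus with subsumption) yields all prime implicates; see Crama and Hammer. We would adapt its proof as follows.

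First, we would use the fact that $S^*$ is exactly the set of prime implicates of $S$; this is the discrete resolution completeness established in \cite{DBLP:conf/jelia/JiD23}. Then we prove by induction on $i$ the invariant: after depleting $o_1,\ldots,o_i$, every prime implicate $\pi$ of $S$ is subsumed by a clause obtainable from the current set using resolutions only on variables in $\{o_{i+1},\ldots\}$.

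The key lemma for the inductive step is a commutation property. A derivation of $\pi$ can be reordered so that all $o_i$-resolutions happen first. Equivalently, if $\gamma$ is an $X$-resolvent of two clauses, one of which is itself a $Y$-resolvent, then $\gamma$ is subsumed by a clause obtained by resolving on $X$ first and then on $Y$. This uses the explicit form $(l_1\wedge l_2)\vee\sigma_1\vee\sigma_2$ from Definition~\ref{def:resolution}: literal intersections on distinct variables commute. Where a term becomes true, the resolution is simply unnecessary.

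Corollary~\ref{cor:resolution-subsumption} lets us ignore deleted subsumed clauses, since their resolvents are subsumed by resolvents of their subsumers. The discrete subtlety is multi-way resolution on one variable: with literals $x_{12},x_{23},x_{13}$, repeated $X$-resolutions are needed. The inner loop must therefore iterate to a fixpoint on $o_i$ (as Algorithm~\ref{alg:sGNRs} does by adding resolvents back into $s_i$). We would check that the $o_i$-closure is reached before moving on.

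At the end, no variables remain, so every prime implicate is subsumed by a clause of $T^*$. Combined with soundness and subsumption-freeness of both sets, this gives $T^*=S^*$.
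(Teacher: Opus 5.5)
Your proposal is correct and follows essentially the paper's route. Both arguments rest on commuting resolution steps on two distinct variables, read off from the explicit resolvent $(l_1\wedge l_2)\vee\sigma_1\vee\sigma_2$, with Corollary~\ref{cor:resolution-subsumption} used to defer subsumption and each variable resolved on exhaustively before moving to the next. Your formulation modulo subsumption, which covers a tautological intermediate resolvent, is in fact more careful than the paper's claim that the unreduced closures $S'$ and $T'$ coincide exactly, a claim that can fail in exactly that tautological case.
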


\begin{proof}[Proof of Proposition~\ref{prop:vd-discrete}]
Let $S^*$ be the set of clauses obtained by closing $S$ under resolution while removing subsumed clauses. By corollary~\ref{cor:resolution-subsumption}, we can assume that subsumption checks happen after all resolutions are all finished for both approaches. Let $S'$ be the set of clauses obtained by closing $S$ under resolution without removing subsumed clauses, and let $T'$ be the set of clauses obtained by variable depletion without removing subsumed clauses. Our goal is to show that $S' = T'$.

It is immediate that $T' \subseteq S'$, as every resolvent generated by variable depletion can be generated by closing $S$ under resolution exhaustively. We next show that $S' \subseteq T'$.

Consider three clauses $\alpha = l_1 \OR r_1 \vee \sigma_1$, $\beta = l_2 \OR r_2 \vee \sigma_2$, and $\gamma = l_3 \OR r_3 \vee \sigma_3$, where $l_1, l_2, l_3$ are $X$-literals and $r_1, r_2, r_3$ are $Y$-literals (false if the clause does not mention the variable). Then the $X$-resolvent of $\alpha$ and $\beta$ is $(l_1 \wedge l_2) \OR r_1 \OR r_2 \vee \sigma_1 \vee \sigma_2$. We can further obtain the $Y$-resolvent of this resolvent with $\gamma$ as $[(l_1 \wedge l_2) \vee l_3] \OR [(r_1 \OR r_2) \wedge r_3] \vee \sigma_1 \vee \sigma_2 \vee \sigma_3$. Observe that we can also get two $Y$-resolvents by resolving $\alpha$ with $\gamma$ and $\beta$ with $\gamma$: this gives us $l_1 \vee l_3 \OR (r_1 \wedge r_3) \vee \sigma_1 \vee \sigma_3$ and $l_2 \vee l_3 \OR (r_2 \wedge r_3) \vee \sigma_2 \vee \sigma_3$. We can then further resolve these two resolvents on $X$ to get the following clause: $[(l_1 \vee l_3) \wedge (l_2 \vee l_3)] \OR [(r_1 \wedge r_3) \vee (r_1 \wedge r_3)] \vee \sigma_1 \vee \sigma_2\vee \sigma_3 = [(l_1 \wedge l_2) \vee l_3] \OR [(r_1 \OR r_2) \wedge r_3] \vee \sigma_1 \vee \sigma_2 \vee \sigma_3$. Therefore, the resolvent of resolutions on three clauses sequentially on $X$ first then $Y$ is equal to the resolvent of some other resolutions on $Y$ first then $X$. This shows that the order of resolution can be ``swapped'' for any two variables, so we can also first resolve exhaustively on a variable and then move on to the next variable. Hence, for any resolvent in $S'$, the clause can also be obtained using variable depletion w.r.t. a given variable order $O$. This shows that $S' \subseteq T'$ and concludes the proof.
\end{proof}

We finally concludes the correctness of Algorithm~\ref{alg:sGNRs}

\begin{proposition}\label{prop:alg-sGNR-final-correctness}
Let NNF $\Delta_c$ be the complete reason for the decision an instance, NNF $\Delta_g$ be the general reason for the decision, and $V$ be the set of variables computed by $\Call{Robustness}{\Delta_c}$. Then $\Call{sGNR}{\Delta, V}$ outputs the set of all shortest GNRs for the decision.
\end{proposition}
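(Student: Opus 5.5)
The plan is to chain the earlier results. The first step concerns the set $V$. By the discussion following Algorithm~\ref{alg:Robustness-vars}, $\Call{Robustness}{\Delta_c}$ returns $V=\{\SetV(\sigma)\mid \sigma \text{ is a shortest NR}\}$. This rests on $\Delta_c$ being monotone and or-decomposable. Under those conditions, the minimal length of the prime implicates at a conjunction is the minimum over the children, with variable sets united on ties. At a disjunction, lengths add and the variable sets combine by pairwise union, since the disjuncts share no variables. I will treat this as given and verify it only briefly by induction on the circuit. Lemma~\ref{lemma:universal-v} then gives $V=\{\SetV(\sigma)\mid \sigma \text{ is a shortest GNR}\}$. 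This is exactly the hypothesis on $V$ in Proposition~\ref{prop:alg-sGNR-correctness}, applied with $\Delta=\Delta_g$ (I read the $\Delta$ in the statement as $\Delta_g$).

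The second step, which I expect to be the main obstacle, is to show that $\Call{sCNF}{\Delta_g,V}$ returns, up to subsumption, the clauses $\bigcup_i S_i$. Here $S$ is the standard CNF of $\Delta_g$ and $S_i=\{c\in S\mid \SetV(c)=v_i\}$. I would prove the following invariant by structural induction. For every subcircuit $\alpha$, the clauses returned by $\Call{sCNF}{\alpha,V}$ are exactly the clauses of the standard CNF of $\alpha$ whose variable set lies inside some $v_i\in V$, after removing subsumed clauses. For a literal this holds directly. For a conjunction, the output is the union of the children's outputs, and filtering commutes with union. For a disjunction, each clause of the CNF of $\alpha\vee\beta$ has the form $c_1\vee c_2$. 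Its variable set contains both $\SetV(c_1)$ and $\SetV(c_2)$, so if it fits inside some $v_i$, then so do $c_1$ and $c_2$. Pruning the children therefore never discards a needed clause, and the explicit test in the algorithm applies the filter at this node. Subsumption removal is harmless by Corollary~\ref{cor:resolution-subsumption}.

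At the root, every clause of $S$ contains some shortest GNR as a subset of its variables, because the shortest clauses of the resolution closure have minimal length $l$. Hence a clause whose variables lie inside some $v_i$ has exactly $v_i$ as its variable set. So the filtered CNF is $\bigcup_i S_i$, modulo subsumed clauses. A subsumed clause $c$ in some $S_i$ is subsumed by a clause $c'$ with $\SetV(c')\subseteq v_i$, and length minimality forces $\SetV(c')=v_i$. Thus this removal stays within the same $S_i$, which Corollary~\ref{cor:resolution-subsumption} covers.

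The final step concerns the main loop of Algorithm~\ref{alg:sGNRs}. It partitions the clauses of $\Gamma$ by variable set into the $S_i$ and closes each under resolution by variable depletion over the variables of $v_i$. Resolving two clauses with variable set $v_i$ gives a clause with variable set $v_i$, by Lemma~\ref{lemma:inc vd clauses}; this uses local fixation, established in the proof of Proposition~\ref{prop:alg-sGNR-correctness}. So each $s_i$ stays inside its class. Proposition~\ref{prop:vd-discrete} then shows that each $s_i$ equals $S_i^*$. Proposition~\ref{prop:alg-sGNR-correctness} shows that $\bigcup_i S_i^*$ is the set of all shortest GNRs, which completes the argument.
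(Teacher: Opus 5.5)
Your proposal is correct and follows the paper's route. The paper's proof is just the chain Lemma~\ref{lemma:universal-v}, Proposition~\ref{prop:alg-sGNR-correctness}, Proposition~\ref{prop:vd-discrete}, and your structural induction showing that Algorithm~\ref{alg:sCNF} returns the non-subsumed part of $\bigcup_i S_i$ fills in a step the paper leaves implicit. One wording slip: it is not true that every clause of $S$ contains the variables of some shortest GNR (e.g.\ $y_1\vee z_1$ for the reason $x_1\wedge(y_1\vee z_1)$), but all you actually use is $\lvert\SetV(c)\rvert\ge l$ for $c\in S$, which does hold.
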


\begin{proof}[Proof of Proposition~\ref{prop:alg-sGNR-final-correctness}]
Follows from Lemma~\ref{lemma:universal-v} and Propositions~\ref{prop:alg-sGNR-correctness}, \ref{prop:vd-discrete}.
\end{proof} 

\end{document}